%
%
%

\documentclass{./cls/svproc}
\usepackage{geometry}
\geometry{
  a4paper 
}

%

\usepackage[T1]{fontenc}

\usepackage{url}

\usepackage{enumitem}
\usepackage[usenames,dvipsnames]{xcolor}


\usepackage{amsmath,amssymb,amsfonts,amsthm,dsfont}

\usepackage{algorithm,algorithmicx,listings}
\usepackage[noend]{algpseudocode}
\algnewcommand{\LineComment}[1]{{\Statex \vspace{1ex} \(\triangleright\) #1}}

\usepackage{graphicx}
\usepackage[]{draftfigure}
\usepackage{todonotes}
\usepackage{subfig}

%
%



\newcommand{\bfb}{\mathbf{b}}
\newcommand{\bfc}{\mathbf{c}}

\newcommand{\bfe}{\mathbf{e}}

\newcommand{\bfq}{\mathbf{q}}

\newcommand{\bfu}{\mathbf{u}}

\newcommand{\bfx}{\mathbf{x}}
\newcommand{\bfy}{\mathbf{y}}



\newcommand{\bbN}{\mathbb{N}}

\newcommand{\bbR}{\mathbb{R}}


\newcommand{\calB}{\mathcal{B}}

\newcommand{\calE}{\mathcal{E}}

\newcommand{\calP}{\mathcal{P}}
\newcommand{\calQ}{\mathcal{Q}}

\newcommand{\calX}{\mathcal{X}}






\newcommand{\sbfont}[1]{\text{#1}}
\newcommand{\snear}{\sbfont{near}}
\newcommand{\srand}{\sbfont{rand}}
\newcommand{\snew}{\sbfont{new}}
\newcommand{\smin}{\sbfont{min}}
\newcommand{\soverlap}{\sbfont{overlap}}
\newcommand{\sexplore}{\sbfont{explore}}
\newcommand{\scurrent}{\sbfont{current}}
\newcommand{\ssample}{\sbfont{sample}}


\newcommand{\BL}[1]{}
\newcommand{\removed}[1]{}

\theoremstyle{definition}

\begin{document}
\mainmatter              
\title{Safe Bubble Cover for Motion Planning on Distance Fields}
\titlerunning{Safe Bubble Cover for Motion Planning on Distance Fields}  
%
\author{Ki Myung Brian Lee\inst{1} \and
Zhirui Dai\inst{1}\and
Cedric Le Gentil\inst{2} \and
Lan Wu\inst{2} \and
Nikolay~Atanasov\inst{1}  \and
Teresa Vidal-Calleja\inst{2}
}
\authorrunning{K. M. B. Lee et al.} 
%
\tocauthor{Ki Myung Brian Lee, Zhirui Dai, Cedric Le Gentil, Lan Wu, Nikolay Atanasov and Teresa Vidal-Calleja}
%

\institute{University of California San Diego, La Jolla, CA 92093, USA \and
University of Technology Sydney, Ultimo NSW 2007, Australia
}

\maketitle              

\setcounter{footnote}{0}
\begin{abstract}
We consider the problem of planning collision-free trajectories on distance fields.
Our key observation is that querying a distance field at one configuration reveals a region of safe space whose radius is given by the distance value, obviating the need for additional collision checking within the safe region.
We refer to such regions as safe bubbles, and show that safe bubbles can be obtained from any Lipschitz-continuous safety constraint.
Inspired by sampling-based planning algorithms, we present three algorithms for constructing a safe bubble cover of free space, named bubble roadmap (BRM), rapidly exploring bubble graph (RBG), and expansive bubble graph (EBG).
The bubble sampling algorithms are combined with a hierarchical planning method that first computes a discrete path of bubbles, followed by a continuous path within the bubbles computed via convex optimization. Experimental results show that the bubble-based methods yield up to 5-10 times cost reduction relative to conventional baselines while simultaneously reducing computational efforts by orders of magnitude. 
\end{abstract}
%
%
%


\section{Introduction}
\label{sec:introduction}



Motion planning is a foundational component of robot autonomy.
It is important not only for operating in complex environments, but also as robots become increasingly physically capable.
To fully harness the physical capabilities of modern robots in complex environments, it is necessary to develop planning algorithms that rapidly produce safe and dynamically feasible trajectories.


A significant bottleneck in conventional motion planning algorithms is collision checking.
To ensure collision avoidance, a planning algorithm typically samples along a candidate trajectory to check for a potential collision.
Although it is possible to accelerate collision queries~\cite{ramsey2024capt}, the computational efficiency of planning algorithms remains bounded by the number of collision checking queries.


We propose an approach to circumvent this fundamental limitation by sampling continuous regions of safe space instead of collision checking individual configurations.
Our key insight is that querying a distance field representation of the environment yields the radius of collision-free space around the query point, since it represents the distance to the closest obstacle. See Fig.~\ref{fig:teaser} for an illustration. We define such regions as `safe bubbles', and show that they can be derived from any Lipschitz-continuous safety constraint, distance fields being a special case for which many perception algorithms are available~\cite{wu2023loggpis,coiffier2024lipschitzndf,oritz2022isdf}.
\begin{figure}
    \centering
    \includegraphics[width=\linewidth]{./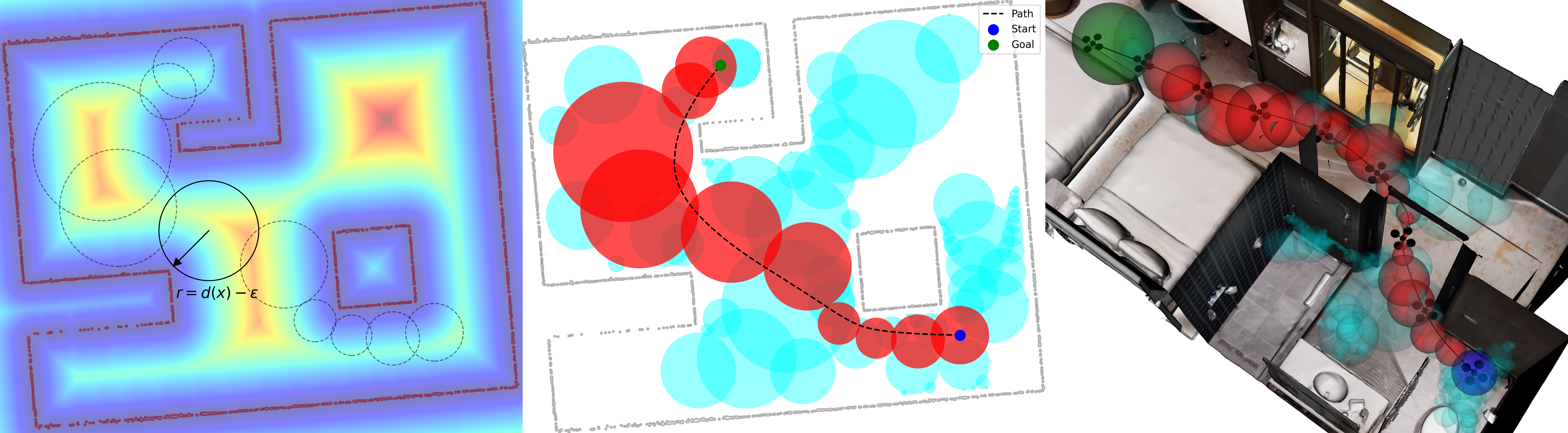}
    \caption{Left: We construct `safe bubbles' from a distance field representation of the environment, whose radii are given by the distance to obstacles. Middle: We present three algorithms for sampling safe bubbles (cyan), and a hierarchical planning method that first computes a bubble path (red) and then a continuous trajectory (dashed line) within the bubble path. Right: Our approach scales effectively to higher dimensions.}
    \label{fig:teaser}
\end{figure}

To sample safe bubbles from a distance field representation, we first introduce three sampling algorithms, named bubble roadmap (BRM), rapidly-exploring bubble graph (RBG), and expansive bubble graph (EBG), inspired by conventional sampling-based planning algorithms.
Given a bubble cover of free space, we present an efficient hierarchical planning method that first computes a sequence of intersecting bubbles minimizing an upper bound on the trajectory length, followed by continuous planning of a dynamically feasible trajectory via convex optimization.

We evaluate the practical utility of safe bubble planning through comparisons against popular sampling-based planning algorithms, PRM$^*$ and RRT$^*$. 
The results show that safe bubble planning provides orders of magnitude improvements in computational efficiency along with 4-5 times improvement in trajectory cost.
We also compare the three algorithms and find that the bubble-based methods can benefit from techniques for even spatial coverage in classical sampling-based planning.  We view the main contribution of this work as establishing the foundation for a new class of planning algorithms for continuous implicit environment representations.

\section{Related Work}
\label{sec:relatedwork}

Sampling-based motion planning methods plan collision-free paths by sampling safe configurations and connecting them with collision-free edges.
Probabilistic roadmap (PRM) variants \cite{kavraki1996prm,bohlin2000lazyprm,karaman2011optimal} draw samples from a uniform distribution, whereas rapidly exploring random tree (RRT) variants \cite{lavalle2001rrt,karaman2011optimal} promote even spatial coverage by "steering" the next sample toward a uniform random sample.
For dynamic feasibility, a continuous trajectory optimization step is often employed to smooth the resulting path \cite{zucker2013chomp}.
An in-depth treatise of discrete and continuous planning methods can be found in \cite{lavalle_planning_book}.
Our approach unifies continuous and discrete planning by sampling continuous regions of collision-free configurations, rather than individual configurations.

Although sampling-based methods are powerful for planning in complex, high-dimensional spaces, computation is often hindered by the cost of repeated \emph{collision checking}.
To this end, previous work attempts to reduce either the number \cite{bohlin2000lazyprm,hou2020posterior} or computation time \cite{ramsey2024capt} of collision checks. We present an orthogonal approach that replaces collision checking with \emph{safe space} construction.

The idea of safe space construction is popular especially for quadrotors \cite{liu2017sfc,chen2024splatnav,wu2024optimal}.
These methods generate a convex partition of free space, using polytopes \cite{liu2017sfc,chen2024splatnav,wu2024optimal} or ellipsoids \cite{huh2022probabilisticellipsoid}.
Given such partitions, convex optimization methods \cite{liu2017sfc,maruci2023gcs_planning,chen2024splatnav,wu2024optimal} can compute cost-optimal continuous-space trajectories. However, constructing a convex partition of free space is not readily compatible with environment representations obtained by perception and mapping methods. The locations and shapes of polytopes or ellipsoids are typically optimized within an occupancy grid or point cloud map of the environment.


In the perception community, an emerging trend is to use implicit representations of occupied space, in the form of distance fields~\cite{coiffier2024lipschitzndf,ueda2022nddf,wu2023loggpis,legentil2024accurate,wu2024vdb} or radiance fields~\cite{mildenhall2021nerf,kerbl2023gsplat}, which allow efficient and accurate reconstruction of an environment. Previous work on planning in these representations explored differentiating through a scene representation for nonlinear trajectory optimization~\cite{adamkiewicz2022nerfplanning,wu2023loggpis,zucker2013chomp} or building a safe convex polytope from Gaussian splats~\cite{chen2024splatnav}.
Our approach generates a safe convex cover by simply querying a distance field representation~\cite{wu2023loggpis}, and shows that any Lipschitz-continuous representation~\cite{revay2020lipschitz,coiffier2024lipschitzndf} can be used. 


Of particular relevance to our work are~\cite{musil2022spheremap,kim2018dancing,noel2024skeleton,ren2022bubbleplanner}.
These approaches also consider spherical safe space representations, computed by keeping track of obstacle points~\cite{kim2018dancing,musil2022spheremap,ren2022bubbleplanner} or a distance transformation of occupancy grid~\cite{noel2024skeleton}, coupled with specialized sampling strategies.
We present a theoretical foundation for these "safe bubble" approaches, and explore their generality as a new class of sampling-based planning algorithms for implicit representations.

\section{Problem Formulation}
\label{sec:problem}

Consider a robot described by a nonlinear dynamical system:
\begin{equation} \label{eq:nonlinear_system}
  \dot{\bfx}(t) = f(\bfx(t), \bfu(t)),
\end{equation}
where $\bfx(t) \in \calX \subseteq \bbR^n$ is the state and $\bfu(t) \in \bbR^m$ is the control input. We assume that \eqref{eq:nonlinear_system} is differentially flat.

\begin{definition}[{\cite[Ch.~2]{rigatos2015nonlinear}}]
    A nonlinear dynamical system \eqref{eq:nonlinear_system} is \emph{differentially flat} if there exists a \emph{flat output} $\bfy(t) \in \bbR^m$ such that:
    \begin{itemize}[nosep]
        \item $\bfy(t)$ is expressed by a smooth function of $\bfx(t)$, $\bfu(t)$, and the first $r \in \bbN$ derivatives of $\bfu(t)$:
        \[
        \bfy(t) = h(\bfx(t),\bfu(t),\dot{\bfu}(t),\ldots,\bfu^{(r)}(t)),
        \]

        \item $\bfx(t)$ and $\bfu(t)$ can be expressed as smooth functions of $\bfy(t)$ and its derivatives:
        \[
        \begin{aligned}
            \bfx(t) &= \alpha(\bfy(t),\dot{\bfy}(t),\ldots,\bfy^{(r-1)}(t)),\\
            \bfu(t) &= \beta(\bfy(t),\dot{\bfy}(t),\ldots,\bfy^{(r)}(t)),          
        \end{aligned}
        \]

        \item there exists no differential relation among the output derivatives such as $\eta(\bfy,\dot{\bfy},\ldots) = 0$.
        
    \end{itemize}
\end{definition}

Many robot systems are differentially flat, including quadrotors \cite{morrell2018differential,mellinger2011differential}, fully actuated Lagrangian systems~\cite{murray1995differential}, and some non-holonomic systems \cite{murray1995differential}.

Our objective is to plan a sufficiently smooth output trajectory $\bfy(t)$ minimizing a convex cost function $c : \bbR^m \rightarrow \bbR$ subject to constraints $\bfy(t) \notin \Omega \subset \bbR^m$. We assume that the distance function of the unsafe set $\Omega$ is known:
\begin{equation}\label{eq:sdf_def}
    d_\Omega(\bfy) = \min_{\bfq \in \partial \Omega} \| \bfy - \bfq \|,
\end{equation}
which maybe constructed online from sensor data~\cite{wu2023loggpis,oleynikova2017voxblox,oritz2022isdf}. 
The problem of planning a collision-free output trajectory can then be formulated as follows:
\begin{equation}\label{eq:problem}
\begin{split}
    \min_{\bfy \in C^r} \quad \int_{0}^{T} & c(\bfy(t)) dt, \\
    \text{s.t. (start and goal point)} \quad & \bfy(0) = \bfy_{s},\; \bfy(T) = \bfy_{g}, \\
    \text{ (collision avoidance)} \quad& d_\Omega(\bfy(t)) \geq \epsilon, \quad \forall t \in [0, T],
\end{split}
\end{equation}
where $\bfy_{s}$ and $\bfy_{g}$ are given start and goal, $\epsilon > 0$ is a parameter, $T$ is the planning horizon, and $\bfy \in C^r$ means that $\bfy: [0,T] \rightarrow \bbR^m$ has a continuous $r$-th derivative.

\section{Constructing a Bubble Cover}\label{sec:sampling}

We present algorithms for efficiently approximating the collision-avoidance constraint in~\eqref{eq:problem} by sampling a \emph{safe bubble cover}.
We define and explain \emph{safe bubbles} in Sec.~\ref{sec:safe_bubbles}, and present three randomized algorithms for constructing such bubbles inspired by well-known sampling-based planning algorithms in Sec.~\ref{sec:brm},~\ref{sec:rbt},~\ref{sec:ebt}.

\subsection{Bubbles for Safe Space Representation}\label{sec:safe_bubbles}
Our core insight for solving \eqref{eq:problem} efficiently is that the distance field $d_\Omega$ can be used to build a set of \emph{bubbles} covering the safe space.
Intuitively, since the distance field $d_\Omega(\bfy)$ yields the closest distance to the unsafe set $\Omega$, there can be no obstacle within radius $d_\Omega(\bfy)$ of a query point $\bfy$. 
Otherwise, $d_\Omega(\bfy)$ must be smaller to reflect the presence of an obstacle within the ball.

More generally, we can define such bubbles whenever the safety constraint in \eqref{eq:problem} is specified by a \emph{Lipschitz continuous} function, which may be constructed from data using methods such as \cite{revay2020lipschitz,coiffier2024lipschitzndf}. We define a \emph{safe bubble} and discuss its properties next.

\begin{theorem}[Safe bubble]\label{thm:bubble}
Consider a constraint $l(\bfy) \geq 0$, where $l:\bbR^m \rightarrow \bbR$ is Lipschitz continuous with constant $L$. Then, for any $\bfy$ such that $l(\bfy) \geq 0$, all points $\bfy'$ in ball $B(\bfy, \frac{l(\bfy)}{L})$ centered at $\bfy$ with radius $\frac{l(\bfy)}{L}$ also satisfy $l(\bfy') \geq 0$.
\end{theorem}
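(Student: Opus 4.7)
The plan is to derive the conclusion as a direct consequence of the Lipschitz property of $l$, treating the constraint value $l(\bfy)$ as a ``budget'' that bounds how much $l$ can decrease before it crosses zero. The entire argument should fit in two or three lines, so rather than any heavy machinery the work lies in identifying the right inequality chain.

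First, I would recall the definition of Lipschitz continuity with constant $L$: for all $\bfy, \bfy' \in \bbR^m$, $|l(\bfy) - l(\bfy')| \leq L \|\bfy - \bfy'\|$. In particular, this gives the one-sided bound $l(\bfy') \geq l(\bfy) - L\|\bfy - \bfy'\|$, which is the key estimate. Next, I would restrict to $\bfy' \in B(\bfy, l(\bfy)/L)$, so that $\|\bfy - \bfy'\| \leq l(\bfy)/L$, and substitute this bound into the previous inequality to obtain $l(\bfy') \geq l(\bfy) - L \cdot l(\bfy)/L = 0$, which is exactly the claim.

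There is no real obstacle here; the only minor subtlety is handling the degenerate case $l(\bfy) = 0$, where the ball has radius zero and the conclusion reduces to $l(\bfy) \geq 0$, which is given. I would also note in passing that the bound is tight in the worst case (e.g.\ when $l$ is exactly a distance-like function with slope $L$ pointing toward the boundary), which justifies the radius $l(\bfy)/L$ as the largest certifiable safe ball from a single query under only a Lipschitz assumption. This last remark motivates why the distance field case ($L = 1$) is especially attractive, since the bubble radius then equals the queried distance value itself.
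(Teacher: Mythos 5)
Your proof is correct and follows exactly the paper's argument: apply the one-sided Lipschitz bound $l(\bfy') \geq l(\bfy) - L\|\bfy - \bfy'\|$ and substitute $\|\bfy - \bfy'\| \leq l(\bfy)/L$ to conclude $l(\bfy') \geq 0$. The remarks about the degenerate case and tightness are not in the paper but are accurate asides that do not change the argument.
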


\begin{proof}
With Lipschitz continuity, we have $l(\bfy) - l(\bfy') \leq L\|\bfy - \bfy'\|$ for any $\bfy,\bfy' \in \bbR^m$. For $\bfy' \in B(\bfy, \frac{l(\bfy)}{L})$, we have $\|\bfy - \bfy'\| \leq \frac{l(\bfy)}{L}$. Combining the two inequalities yields $l(\bfy') \geq 0$ (i.e., arbitrary points $\bfy'$ in $B(\bfy, \frac{l(\bfy)}{L})$ are feasible). 
\end{proof}





Theorem~\ref{thm:bubble} includes the distance field constraint in \eqref{eq:problem} as a special case with $l(\bfy) = d_{\Omega}(\bfy) - \epsilon$ because the distance function $d_\Omega$ of any set $\Omega$ is Lipschitz continuous with constant $L=1$.
Importantly, Theorem~\ref{thm:bubble} implies that querying the distance field at a point where $d_\Omega(\bfy) \geq \epsilon$ readily yields a safe bubble around the point.
Then, an important consideration is \emph{how to sample} such query points at which to generate safe bubbles to cover the safe space. Next, we take inspiration from sampling-based planning methods to generate safe bubbles.



\subsection{Bubble Roadmap}\label{sec:brm}
\begin{algorithm}[b]
    \caption{Bubble Roadmap Algorithm}\label{alg:brm}
    \begin{algorithmic}[1]
    \Statex \textbf{Parameters:} No. of samples $N_\ssample$, minimum radius $r_{\text{min}}$, footprint radius $\epsilon$. 
    \State $\mathcal{C} \leftarrow \textsc{sample}(N_\ssample)$ \Comment{Sample $N$ random centers.}\label{alg:brm:sample}
    \State $\mathcal{B} = \{ \}$ \Comment{Initialize bubble cover as an empty set.}
    \For{ $\mathbf{y} \in \mathcal{C}$ } 
    \If{ $d_\Omega(\bfy) - \epsilon > r_{\text{min}}$ } \Comment{If radius greater than $r_{\text{min}}$,}\label{alg:brm:size_check}
    \State $\mathcal{B} \leftarrow \mathcal{B} \cup \{B_{\text{new}} = (\bfy, d_\Omega(\bfy) - \epsilon)\}$ \Comment{add to set of bubbles.}\label{alg:brm:keep}
    \EndIf
    \EndFor
    \Return $\mathcal{B}$
    \end{algorithmic}
\end{algorithm}
\begin{figure}
    \centering
    \subfloat[100 samples]{
    \includegraphics[width=0.3\textwidth]{./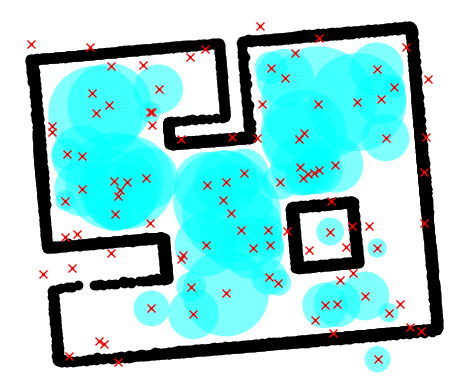}
    }%
    \subfloat[300 samples]{
    \includegraphics[width=0.3\textwidth]{./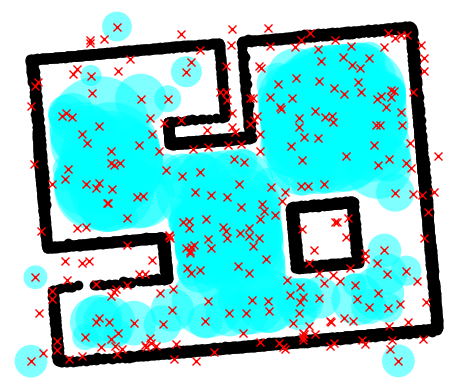}
    }%
    \subfloat[1000 samples]{
    \includegraphics[width=0.3\textwidth]{./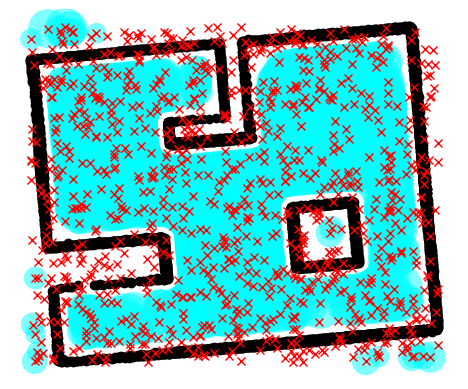}
    }%
    \caption{Illustration of the BRM algorithm with varying number of samples (red crosses) and bubbles (cyan). Not all random samples lead to a valid bubble due to footprint and minimum radius requirements. The free space is filled as the number of samples increases (from (a) to (c)).}
    \label{fig:brm_illustration}
\end{figure}
Inspired by PRM~\cite{kavraki1996prm}, the simplest method we propose is the \emph{bubble roadmap} (BRM), described in Alg~\ref{alg:brm}. Just as PRM samples random configurations, BRM samples random bubble centers from a uniform distribution.  
In doing so, a minimum radius requirement is enforced, which speeds up subsequent planning inside the bubble cover, discussed in Sec.~\ref{sec:planning}.  
This is achieved by first sampling possible centers for the bubbles, computing the radii as per Theorem~\ref{thm:bubble}, and keeping only those large enough. 
Results with varying numbers of samples are visualized in Fig.~\ref{fig:brm_illustration}.

A potential concern with BRM is that some samples may be redundant because some bubbles may contain other. However, we show that the event of one bubble containing another is of probability zero.

\begin{theorem}[Random bubbles do not contain each other]\label{thm:bubble_containment}
Let $\bfy_{1}$, $\bfy_{2}$ be samples from a distribution such that $\|\bfy_1 - \bfy_2\| = \delta$ is probability zero $\forall \delta \geq 0$. Let $B_{1} = (\bfy_{1}, r_1)$ and $B_{2} = (\bfy_{2}, r_2)$ be safe bubbles for a Lipschitz constraint $l(\bfy) \geq 0$ as per Theorem~\ref{thm:bubble}. Then, the probability of $B_1 \subseteq B_2$ is zero.
\end{theorem}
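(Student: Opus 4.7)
The plan is to reduce the containment $B_1 \subseteq B_2$ to a single tight equality via the Lipschitz bound, and then invoke the non-atomicity hypothesis on the pairwise distance. Recalling that $r_i = l(\bfy_i)/L$ per Theorem~\ref{thm:bubble}, the standard geometric criterion for one Euclidean ball to lie inside another gives
\[
  B_1 \subseteq B_2 \iff \|\bfy_1 - \bfy_2\| + r_1 \leq r_2 \iff \|\bfy_1 - \bfy_2\| \leq \frac{l(\bfy_2) - l(\bfy_1)}{L}.
\]
The Lipschitz inequality supplies the reverse bound $\frac{l(\bfy_2) - l(\bfy_1)}{L} \leq \|\bfy_1 - \bfy_2\|$, so containment forces the exact equality $\|\bfy_1 - \bfy_2\| = (l(\bfy_2) - l(\bfy_1))/L$, which in particular requires $l(\bfy_2) \geq l(\bfy_1)$.

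It then suffices to show that this equality event occurs with probability zero. The natural approach is to condition on $\bfy_2$: for a fixed realization of $\bfy_2$, the subset of $\bfy_1 \in \bbR^m$ satisfying the equality is the level set $\{\bfy_1 : L\|\bfy_1 - \bfy_2\| + l(\bfy_1) = l(\bfy_2)\}$, a closed codimension-one set that is Lebesgue-null for a generic Lipschitz $l$. The distributional hypothesis—that $\|\bfy_1 - \bfy_2\|$ attains no particular value $\delta$ with positive probability—is the scalar footprint of the joint law on $(\bfy_1,\bfy_2)$ being non-atomic on such thin sets; integrating the conditional null events against the marginal of $\bfy_2$ then yields $P(B_1 \subseteq B_2) = 0$.

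The main obstacle is bridging the gap between the hypothesis, which is stated for a deterministic $\delta$, and the derived equality whose ``target distance'' $(l(\bfy_2) - l(\bfy_1))/L$ is itself random. The cleanest route I foresee is to condition on $\bfy_2$ so that the target depends only on $\bfy_1$, and then argue that the induced conditional distribution of $\|\bfy_1 - \bfy_2\|$ inherits non-atomicity. An equivalent and slightly slicker path is to reinterpret the hypothesis as absolute continuity of the joint law, under which every codimension-one hypersurface in $\bbR^m \times \bbR^m$—including the one carved out by the equality above—automatically carries zero mass.
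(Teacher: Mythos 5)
Your reduction step is exactly the paper's: containment forces tightness in the Lipschitz inequality, so the event collapses to $\|\bfy_1-\bfy_2\| = |r_1-r_2|$. (Your one-sided criterion $\|\bfy_1-\bfy_2\|+r_1\le r_2$ is in fact slightly more precise than the paper's $\|\bfy_1-\bfy_2\|\le|r_1-r_2|$, which characterizes containment in either direction, but only the forward implication matters.) Where you genuinely go beyond the paper is in flagging that the last step does not follow from the stated hypothesis: the hypothesis rules out $\{\|\bfy_1-\bfy_2\|=\delta\}$ only for \emph{deterministic} $\delta$, whereas the derived equality has a random right-hand side $(l(\bfy_2)-l(\bfy_1))/L$. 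The paper's proof silently elides this; you correctly identify it as the crux.

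Neither of your proposed patches closes that gap, however, and no patch can in full generality, because the tight-equality set is a level set of a Lipschitz function and need not be a null ``codimension-one hypersurface.'' Concretely, take $m=1$, $l(y)=1-|y|$ (the distance field of $\Omega=\bbR\setminus(-1,1)$ with $L=1$, $\epsilon=0$) and $\bfy_1,\bfy_2$ i.i.d.\ uniform on $(-1,1)$: for $y>0$ the bubble is $B(y,1-|y|)=[2y-1,1]$, so $B_1\subseteq B_2$ whenever $0<\bfy_2<\bfy_1$, an event of probability $1/8$, even though $\|\bfy_1-\bfy_2\|$ is non-atomic. So the phrase ``Lebesgue-null for a generic Lipschitz $l$'' is carrying all the weight, and it is false without further hypotheses; likewise, absolute continuity of the joint law does not help unless you first prove the equality set is null. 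What actually rescues the claim in the intended setting is a geometric fact specific to distance fields in $m\ge 2$: tightness $|d_\Omega(\bfy_1)-d_\Omega(\bfy_2)|=\|\bfy_1-\bfy_2\|$ forces $\bfy_1$ and $\bfy_2$ to be collinear with a common nearest obstacle point, so for almost every fixed $\bfy_2$ (where the nearest point is unique) the offending $\bfy_1$ lie on a single line, which is Lebesgue-null for $m\ge 2$ under an absolutely continuous sampling law. A complete proof needs that argument, or an explicit added hypothesis; your write-up identifies the right obstacle but does not overcome it.
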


\begin{proof}
First, notice $B_{1} \subseteq B_{2}$ iff $\|\bfy_1 - \bfy_2\| \leq |r_1 - r_2|$.
With Theorem~\ref{thm:bubble}, we have $|r_1 - r_2| = |\tfrac{l(\bfy_{1}) - l(\bfy_2)}{L}|$.
Meanwhile, by Lipschitz continuity, $|\tfrac{l(\bfy_{1}) - l(\bfy_2)}{L}| \leq \| \bfy_1 - \bfy_2\|$.
Thus, $B_1 \subseteq B_2$ is only possible when $\|\bfy_1 - \bfy_2\| = |r_1 - r_2|$, which is probability zero as per assumption.
\end{proof}

Theorem~\ref{thm:bubble_containment} applies to most practical sampling distributions, including uniform. However, the problem of redundancy remains, because Theorem~\ref{thm:bubble_containment} only applies to the case of one bubble wholly containing another. 
It is still possible and empirically frequent that the union of multiple bubbles contains another bubble. Next, we consider sampling strategies that improve on BRM in terms of overlap redundancy.



\subsection{Rapidly Exploring Bubble Graph}\label{sec:rbt}
%
Although BRM samples centers from a uniform distribution, the resulting bubbles can exhibit uneven coverage of space because their radii vary over space. Inspired by RRT~\cite{lavalle2001rrt}, we propose the \emph{rapidly exploring bubble graph} (RBG) algorithm that improves coverage by expanding bubbles toward random samples. 

\begin{algorithm}[t]
    \caption{Rapidly-exploring Bubble Graph}\label{alg:rbt}
    \begin{algorithmic}[1]
    \Statex \textbf{Parameters}: Starting location $\bfy_{s}$, minimum radius $r_{\smin}$, footprint radius $\epsilon$
    \State $\mathcal{B} \leftarrow \{B(\mathbf{y}_{s}, d_\Omega(\mathbf{y}_s) - \epsilon)\}$ \Comment{Initialize with bubble at starting location}\label{alg:rbt:init}
    \While{termination condition is not met}
    \State $\bfy_{\srand} \leftarrow \textsc{sample\_outside}(\mathcal{B})$ \Comment{Sample outside current bubbles}\label{alg:rbt:sample}
    \State $B_{\snear} = (\bfy_{\snear}, r_{\snear}) \leftarrow \min_{B \in \mathcal{B}} d(\bfy_{\srand}, B)$ \Comment{Find nearest bubble (see \eqref{eq:dist_bubble})}\label{alg:rbt:nearest}
    \State $\bfy_{\snew} \leftarrow \textsc{steer}(\bfy_\srand, B_\snear)$ \Comment{Steer to perimeter of nearest bubble (see \eqref{eq:rbt_steer})}\label{alg:rbt:steer}
    \If{$r_{\snew} = d(\bfy_{\snew}) - \epsilon > r_{\smin} $}
    \State $\mathcal{B} \leftarrow \mathcal{B} \cup \{B(\bfy_{\snew}, r_{\snew})\}$ \Comment{Add new bubble if size requirement holds}
    \EndIf
    \EndWhile    
    \end{algorithmic}
\end{algorithm}
\begin{figure}
    \centering
    \subfloat[]{
    \includegraphics[width=0.32\textwidth]{./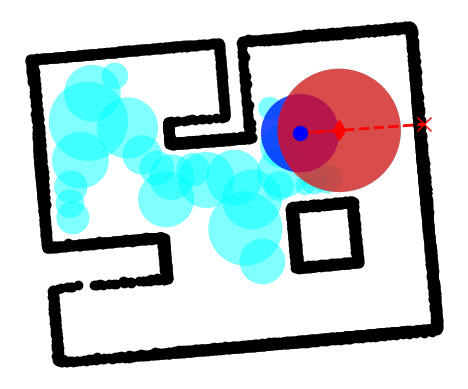}
    }%
    \subfloat[]{
    \includegraphics[width=0.32\textwidth]{./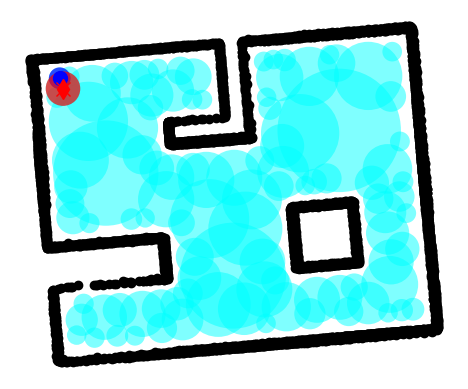}
    }
    \caption{Illustration of the RBG algorithm. Similar to RRT, RBG promotes even spatial coverage by sampling and steering towards random points (red crosses). The center of a new bubble (red diamond) is set at the perimeter of the nearest bubble (blue). Repeating this process from a) to b), we obtain a safe bubble cover of the free space with limited variation in radii.}
    \label{fig:rbg_illustration}
    \vspace{-2ex}
\end{figure}

The RBG algorithm is described in Alg.~\ref{alg:rbt}, and illustrated in Fig.~\ref{fig:rbg_illustration}. A set of bubbles is initialized with a bubble at a seed point $\bfy_{s}$, which may be the start point in \eqref{eq:problem} (line~\ref{alg:rbt:init}).
Similar to RRT, a random point is sampled (line~\ref{alg:rbt:sample}), and the nearest bubble is identified (line~\ref{alg:rbt:sample}). In doing so, we use the distance to the \emph{boundary} of the existing bubbles:
\begin{equation}\label{eq:dist_bubble}
    d(\bfy, B) = \|\bfy - \bfc\| - r,
\end{equation}
where $\bfc$ and $r$ are the center and radius of $B = (\bfc, r)$ respectively.  

Steering is achieved by setting the new center at the intersection between the boundary of the nearest bubble and the straight line between the random point and the nearest bubble center~(line~\ref{alg:rbt:steer}).
In other words, the new center is steered toward the random sample, up to the \emph{perimeter} of the nearest bubble:
\begin{equation}\label{eq:rbt_steer}
    \bfy_{\snew} = \bfy_{\snear} + r_{\snear} \frac{\bfy_{\srand} - \bfy_{\snear}}{\|\bfy_{\srand} - \bfy_{\snear}\|}.
\end{equation}

The main difference between RBG and RRT is that the random samples $\bfy_{\srand}$ must be outside the union of existing bubbles (line~\ref{alg:rbt:sample}).
This is because, otherwise, the new center after steering~\eqref{eq:rbt_steer} will be contained inside an existing bubble, slowing down the planning progress.
Sampling outside existing bubbles can be achieved with rejection sampling in the simplest case, though more sophisticated and efficient methods may be possible.
With rejection sampling, we found it beneficial to inflate the support of the random samples $\bfy_\srand$, especially in closed obstacles.
This leaves room for random samples to still be drawn from outside existing bubbles even after expansion.
Suitable termination conditions for RBG include checking if a certain number of bubbles have been drawn, or if a given target point is reached. 

\subsection{Expansive Bubble Graph}\label{sec:ebt}

Another pertinent idea to achieve even bubble coverage is to consider the local density of existing samples, as is done in the EST algorithm~\cite{hsu1999est}. We propose the \emph{expansive bubble graph} (EBG) algorithm, which aims to achieve even bubble coverage by limiting overlap with existing bubbles.


\begin{algorithm}[t]
    \caption{Expansive Bubble Graph}\label{alg:ebg}
    \begin{algorithmic}[1]
    \Statex \textbf{Parameters}: No. of directions $N_{\sexplore}$, overlap factor $k_{\soverlap}$, min. radius $r_{\smin}$, footprint radius $\epsilon$, 
    \State $\mathcal{Q} \leftarrow \{ B(\bfy_{s}, d_\Omega(\bfy_s) - \epsilon )\}$ \Comment{Priority queue in descending order of radius}\label{alg:ebg:init}
    \State $\mathcal{B} \leftarrow \{ \}$
    \While{$\mathcal{Q}$ is not empty and termination condition not met}
    \State $B_{\scurrent} = (\bfy_{\scurrent}, r_{\scurrent}) \leftarrow \textsc{pop}(\mathcal{Q})$ 
    \LineComment{Skip if overlap with existing bubbles}
    \If{$\min_{B \in \mathcal{B}} d(\bfy_{\scurrent}, B) < -k_{\soverlap} r_{\scurrent}$} \textbf{continue}\label{alg:ebg:overlap}
    \EndIf
    \State $\mathcal{B} \leftarrow \mathcal{B} \cup \{ B_{current} \}$ \Comment{Otherwise, append and expand}
    \For{$i \in [1, N_{\sexplore}^m]$}
    \State $\bfy_{\snew} \leftarrow \bfy_{\scurrent} + r_{\scurrent} \hat{\mathbf{e}}_{i} $ \Comment{Expand in random or uniform directions.}\label{alg:ebg:expand}
    \If{$r_{new} = d(\bfy_{\snew}) - \epsilon > r_{\smin}$} \Comment{Enqueue if large enough}
    \State $\mathcal{Q} \leftarrow \textsc{push}(\mathcal{Q}, B(\bfy_{\snew}, r_{\snew}) )$\label{alg:ebg:enqueue}
    \EndIf
    \EndFor    
    \EndWhile
    \end{algorithmic}
\end{algorithm}
\begin{figure}[t]
    \centering
    \subfloat[Iteration 2]{
    \includegraphics[width=0.32\textwidth]{./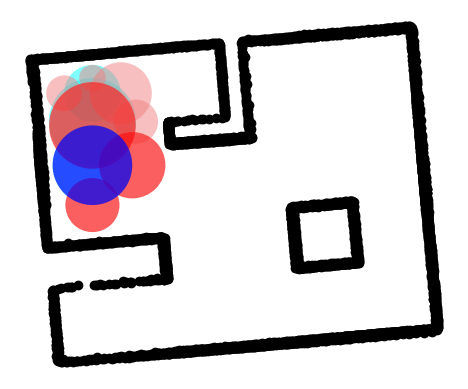}
    } 
    \subfloat[Iteration 3]{
    \includegraphics[width=0.32\textwidth]{./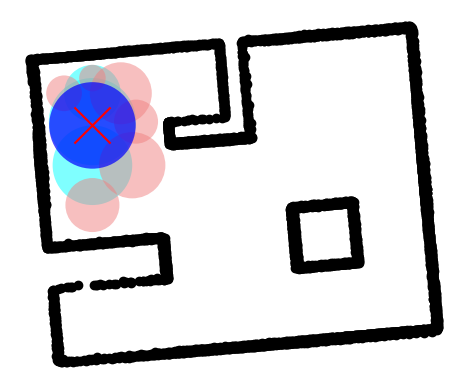}
    }%
    \subfloat[Iteration 190]{
    \includegraphics[width=0.32\textwidth]{./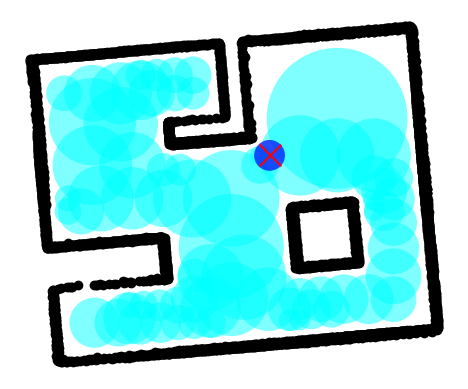}
    }
    \caption{Illustration of EBG with $N_{explore}=4$. a) New bubbles (red) are expanded from current (blue). b) A new bubble from a) is considered, but discarded because of overlap (red cross) c) Repeating this process fills the free space with confirmed bubbles (cyan).}
    \label{fig:ebg_illustration}
\end{figure}

Pseudocode for the EBG algorithm is presented in Alg.~\ref{alg:ebg}, and is illustrated in Fig.~\ref{fig:ebg_illustration}. 
The EBG algorithm expands new bubbles in $N_{\sexplore}$ different directions per dimension, and keeps ones that have limited overlap with existing bubbles and are larger than $r_\smin$. The EBG algorithm starts by initializing a priority queue $\mathcal{Q}$ with a bubble at the robot's starting location~(line~\ref{alg:ebg:init}). 
The queue $\mathcal{Q}$ is in descending order of radii so that larger bubbles appear first.

During iteration, the largest bubble in the queue is popped and added to the cover if there is a limited overlap with existing bubbles~(line~\ref{alg:ebg:overlap}).
Overlap is measured using the ratio $k_{\soverlap}$ of a bubble's own radius $r_{\scurrent}$ to the signed distance of the center to the union of existing bubbles: 
%
$d(\bfy, \cup_{B_e \in \mathcal{B}} B_e) = \min_{B_e \in \mathcal{B}} d(\bfy, B_e)$,
%
where $d(\bfy, B_e)$ is given by~\eqref{eq:dist_bubble}.
\removed{As shown in Fig.~\ref{fig:overlap},} 
Setting $k_{\soverlap}=0$ discards bubbles whose center is on the perimeter of an existing bubble, whereas $k_{\soverlap}=1$ only discards bubbles that are contained in an existing bubble.
\removed{
\begin{figure}[t]
    \centering
    \subfloat[$k=0$]{
    \includegraphics[width=0.25\textwidth]{./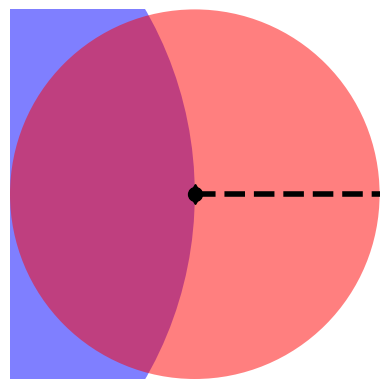}
    }
    \subfloat[$k=0.5$]{
    \includegraphics[width=0.25\textwidth]{./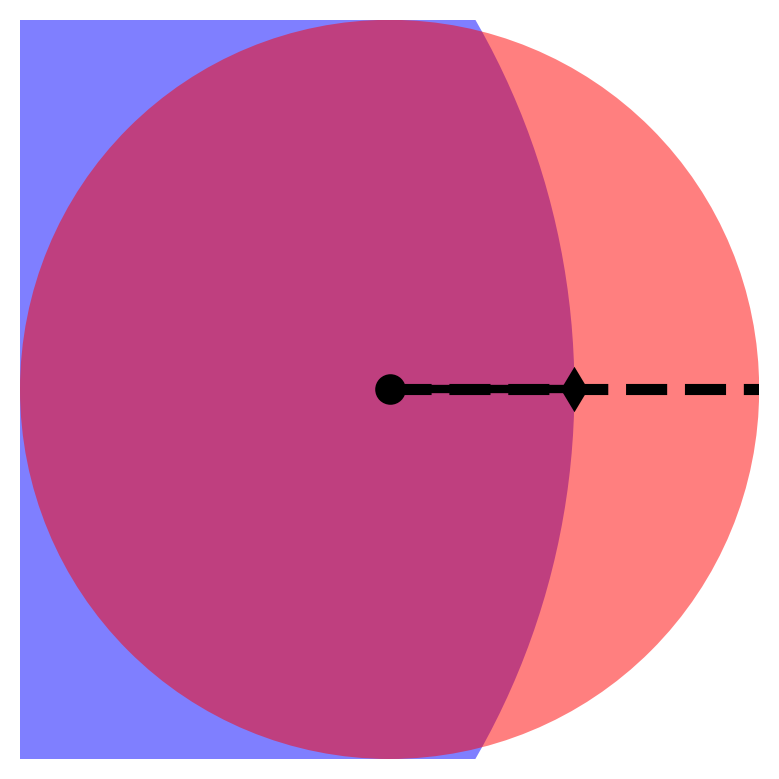}
    }
    \subfloat[$k=1.0$]{
    \includegraphics[width=0.25\textwidth]{./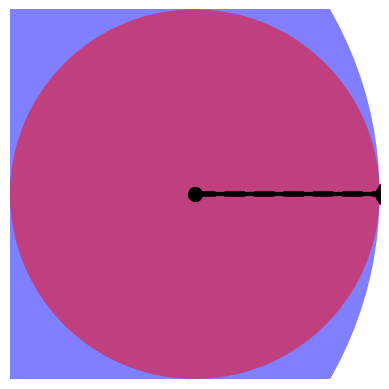}\label{fig:overlap:touching}
    }
    \caption{Visualization of different overlap factors. a) $k_{overlap}=0$ implies bubbles are discarded if the center is on or inside an existing bubble. c) $k_{overlap}=1$ only discards if fully contained. b) At $k_{overlap}=0.5$, new bubbles are discarded if half of its radial line is inside an existing bubble. Dashed line: radius of new bubble. Solid line: distance between new center and boundary of existing bubble.}
    \label{fig:overlap}
\end{figure}
}


Expansion is performed in a total of $N_\sexplore^m$ directions, $\hat{\bfe}_{i}$, which are unit vectors sampled in uniform or random angular increments. 
The expanded bubbles are enqueued for subsequent iterations if the minimum radius requirement $r_{min}$ holds (line~\ref{alg:ebg:enqueue}).
The algorithm terminates if the queue is empty, or if an early termination condition holds.
Suitable early termination conditions for EBG include a certain number of bubbles being reached, or a desired end point being contained in the current bubble.

\section{Planning in Bubble Cover}\label{sec:planning}

The bubble cover $\calB$ generated by the sampling algorithms in Sec.~\ref{sec:sampling} enables a hierarchical discrete-continuous planning method. Using $\calB$ as an approximation of the safe space, the original planning problem \eqref{eq:problem} can be approximately recast as:
\begin{equation}\label{eq:problem_union}
\begin{split}
    \min_{\bfy \in C^{r}} \quad & \int_{0}^{T} c(\bfy(t)) dt, \\
          \text{s.t.} \quad & \bfy(0) = \bfy_{s}, \bfy(T) = \bfy_{g},\\
          & \bfy(t) \in \bigcup_{B \in \mathcal{B}} B, \;\; \forall t \in [0, T],
\end{split}
\end{equation}
where the collision-free constraint has been replaced by containment in the bubble cover.
This conservative reformulation of \eqref{eq:problem} is substantially simpler than the original problem because the nonlinear feasible set has been replaced by a union of simple convex sets. However, the disjunctive containment constraint remains non-convex.


We develop an efficient hierarchical planning method that utilizes the convexity of the bubbles. We first build an intersection graph of bubbles to generate a discrete \emph{bubble path}, followed by using the bubbles along the path as convex constraints to generate a continuous trajectory.

\subsection{Discrete Planning of Bubble Path}
The bubble path should be so chosen to ensure a) feasibility and b) approximate optimality of the continuous trajectory within the bubbles.
Feasibility can be ensured by constructing an intersection graph of the bubble cover.
An intersection graph $\mathcal{G} = (\calB, \calE)$ is an undirected graph where each node is a bubble, and an edge $(i, j) \in \calE$ indicates an overlap between two bubbles $B_i \cap B_j \neq \emptyset$.
Intuitively, where two bubbles are connected by an edge, it is feasible for the robot to continuously move between the two bubbles through the overlap.
\removed{
Alg.~\ref{alg:gob} simply iterates through all unique pairs $B_i$ and $B_j$ to check for overlap.

\begin{algorithm}[t]
    \caption{Construction of Intersection Graph}\label{alg:gob}
    \begin{algorithmic}[1]
    \State $\mathcal{B} \leftarrow \textsc{sample\_bubbles}()$ \Comment{Sample bubbles (see Secs.~\ref{sec:brm},~\ref{sec:ebt},~\ref{sec:rbt})}
    \State $\mathcal{E} \rightarrow \{ \}$ \Comment{Initialize edges as an empty set}
    \For{$B_{i}, B_{j} \in \mathcal{B}$, $i < j$}
    \If{$B_{i} \cap B_{j} \neq \emptyset$} $\mathcal{E} \leftarrow \mathcal{E} \cup \{ (i, j) \}$ \Comment{Add an edge where there is an overlap}
    \EndIf
    \EndFor
    \Return $G = (\mathcal{B}, \mathcal{E})$
    \end{algorithmic}
\end{algorithm}
}

With the intersection graph constructed, approximate optimality of the bubble path can be ensured by considering the `worst-case optimal cost' $\hat{c}(B_{i}, B_{j})$ between two bubbles $B_i$ and $B_j$, defined as: 
\begin{equation}\label{eq:maximum_optimal_cost}
\begin{split}
    \hat{c}(B_{i}, B_{j}) \equiv & \max_{\bfy_{ij}^{s}}  \min_{\bfy_{ij} \in C^{r}} \int_{0}^{T_{ij}} c(\bfy_{ij}(t)) dt\\
    \text{s.t. } \quad & \bfy_{ij}(0) = \bfy_{ij}^{s}, \\
                       & \bfy_{ij}(t) \in B_i, \; \forall t \in [0, T_{ij}), \; \bfy(T_{ij}) \in B_j. 
\end{split}
\end{equation}
Here, the duration $T_{ij}$ of potential trajectory $\bfy_{ij}(t)$ is chosen arbitrarily, e.g., as $T_{ij} = \tfrac{r_i}{V_0}$ for some nominal speed $V_0$ and $r_i$ being the radius of bubble $B_i$.
The intuition for this cost is as follows.
Between bubbles $B_i$ and $B_j$, consider picking an optimal trajectory that is contained within $B_i$ and reaches $B_j$ given a start point $\bfy_{ij}^s \in B_i$, so that the terminal point $\bfy_{ij}(T_{ij})$ is chosen greedily.
Since the terminal point is the start point of the next bubble,~\eqref{eq:maximum_optimal_cost} captures the case when such greedy choice of terminal point is adversarial, so that the terminal point from $B_i$ is the worst start point for $B_j$.
Since the true optimum of~\eqref{eq:problem_union} is a special case with $\bfy_{ij}^s$ and $T_{ij}$ being selected optimally across all bubbles, the sum of maximum optimal cost~\eqref{eq:maximum_optimal_cost} along a path forms an upper bound on the true optimal cost of~\eqref{eq:problem_union}.

When the given cost function is length (i.e., $c(\bfy(t)) = \|\dot{\bfy}(t)\|$), the worst-case optimal cost~\eqref{eq:maximum_optimal_cost} reduces to the single-sided Hausdorff distance:
\begin{equation}\label{eq:hausdorff}
    d_{H}(B_{i}, B_{j}) \equiv \sup_{\bfy_{i} \in B_{i}} d(\bfy_{i}, B_{j}) = | \| \bfc_{i} - \bfc_{j} \| + r_{i} - r_{j}|,
\end{equation}
which holds for two overlapping bubbles $B_i = (\bfc_i, r_i)$ and $B_j = (\bfc_j, r_j)$.


For simplicity, we adopt the Hausdorff distance~\eqref{eq:hausdorff} as an approximation of the worst-case optimal cost~\eqref{eq:maximum_optimal_cost}, with the expectation that shorter length trajectories incur less cost.
With this approximation, the bubble path $\calP$ can be found by solving a graph shortest path problem:
\begin{equation}\label{eq:problem_discrete}
\begin{split}
    \min_{\mathcal{P} = (B_p)} \quad & \sum_{p} d_H(B_{p}, B_{p+1}), \\ 
    \text{s.t.} \quad & (B_{p}, B_{p+1}) \in \mathcal{E}, \\
                      & B_{1} \in \mathcal{B}(\bfy_{s}), B_{|\mathcal{P}|} \in \mathcal{B}(\bfy_{t}),
\end{split}
\end{equation}
where $\mathcal{B}(\bfy_{s})$ and $\mathcal{B}(\bfy_{t})$ denote the bubbles that contain the start position $\bfy_{s}$ and target position $\bfy_{t}$ respectively.
There may be multiple bubbles that contain either start or target position.
Because the Hausdroff distance~\eqref{eq:hausdorff} is non-negative, Dijkstra's algorithm can be used to quickly solve for the shortest path. 



\subsection{Continuous Planning}
With a discrete bubble path $\mathcal{P} = \left\{ B_{p} \right\}_p$ given, we compute a sequence of $|\mathcal{P}|$ continuous segments $\bfy_{p}(s) : [0, T_p] \rightarrow \mathbb{R}^{N}$, one for each bubble in the discrete path $P$. This can be formulated as:
\begin{equation}\label{eq:problem_bezier}
\begin{split}
    \min_{\bfy_{1}(t), \ldots \bfy_{|\mathcal{P}|}(t)} \quad  & \sum_{p} \int_{0}^{T_p} c(\bfy_{p}(t)) dt, \\
          \text{s.t. (bubble containment)} \quad & \bfy_{p}(t) \in B_{p}, \quad \forall p, t, \\
          \text{ (start/end points)} \quad & \bfy_{1}(0) = \bfy_{s}, \bfy_{|\mathcal{P}|}(1) = \bfy_{g},\\
          \text{ (continuous derivatives)} \quad& \bfy_{p}^{(d)}(T_p) = \bfy_{p+1}^{(d)}(0), \quad \forall d \in [0, r], \forall p.
\end{split}
\end{equation}

Conveniently, the continuous problem~\eqref{eq:problem_bezier} can be solved as a convex program by parameterizing the trajectories $\bfy_{p}(t)$ as \emph{Bezier curves} given by:
\begin{equation}
    \bfy_{p}(t) = \sum_{k=0}^{K} b_{k}\left(\tfrac{t}{T_p}\right) \mathbf{b}_{k}^{p},
\end{equation}
where $b_{k}(s) = \binom{K}{k} s^{k}(1-s)^{K-k}$ are the Bernstein basis polynomials, $\mathbf{b}_{k}^{p} \in \mathbb{R}^{m}$ are the \emph{control points}, $T_p$ is the duration of each trajectory, and $K$ is the order of the Bezier curve. As noted in~\cite{maruci2023gcs_planning}, Bezier curves have several useful properties that allow parameterizing \eqref{eq:problem_bezier} as a convex program of control points. The start and end of each Bezier curve are given by $\bfy_p(0) = \bfb_0^p$, $\bfy_p(T_p) = \bfb_K^p$, and the derivatives are another Bezier curve with control points given by the finite difference $\Delta[\bfb_{k}^p] = \bfb_{k+1}^p - \bfb_k^p$ of control points as
    $\dot{\bfy}_{p}(t)= \sum_{k}^{K-1} b_k(\tfrac{t}{T_p}) \tfrac{K}{T_p} \Delta[\bfb_{k}^{p}]$.
These two properties show that the derivative continuity and start/end points constraints in~\eqref{eq:problem_bezier} are affine in the control points $\bfb_{k}^{p}$.

More importantly, each curve $\bfy_{p}$ is entirely contained in the convex hull of control points\removed{(i.e. $\bfy_{p}(t) \in \text{conv}\{ \mathbf{b}_{k}^{p} \}_{k=1}^{K}$)}.
Therefore, a \emph{sufficient} relaxation of the bubble containment constraint is to ensure that all control points are contained in respective bubbles, i.e., $\mathbf{b}_{k}^{p} \in B_{p}$, $\forall k, p$. 
Moreover, the cost function remains a convex function of control points since the trajectory is an affine function of the control points at each $t$, and can be upper bounded as $\int_{0}^{T_p} c(\bfy_p(t)) dt \leq \tfrac{1}{K + 1} \sum_{k} c(\bfb_k^p)$ \cite{maruci2023gcs_planning}.  
Furthermore, common cost functions, such as the time integral of squared norm of $n$-th order derivatives of Bezier curves can be represented as a positive semidefinite quadratic form of $\mathbf{b}_{k}^{p}$ \cite{mellinger2011differential}.

Combining these properties, the continuous problem~\eqref{eq:problem_bezier} can be written in terms of control points $\bfb = \{\bfb_{k}^{p}\}$ in the following form:
\begin{equation}
\begin{split}
    & \min_{\bfb } \quad  \sum_{p, k} c(\bfb_{k}^{p}), \\
    \text{s.t. (bubble containment)} \quad & \bfb_{k}^p \in B_{p}, \quad \forall p, k, \\
    \text{ (start/end points)} \quad & \bfb_0^{0} = \bfy_{s}, \bfb_K^{|\calP|} = \bfy_t, \\
    \text{ (continuous derivatives)} \quad & (\Delta)^{d}[\bfb_{K-d}^{p}] = (\Delta)^{d}[\bfb_0^{p+1}], \quad \forall d \in [0, r], \forall p, 
\end{split}    
\end{equation}
where $\Delta^{d}$ is the $d$-th finite difference. The bubble containment constraint is quadratic, while the other constraints are affine. As the cost is convex, the overall problem is a convex program. In the special case when the cost is the squared norm of $d$-th derivatives, the cost is quadratic, and the overall problem is a quadratically-constrained quadratic program.





\subsection{Extension to Unknown Environments}

\begin{figure}[t]
    \centering
    \subfloat{\includegraphics[width=0.32\linewidth]{./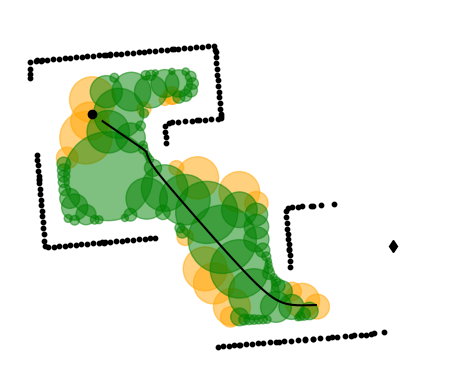}}
    \subfloat{\includegraphics[width=0.32\linewidth]{./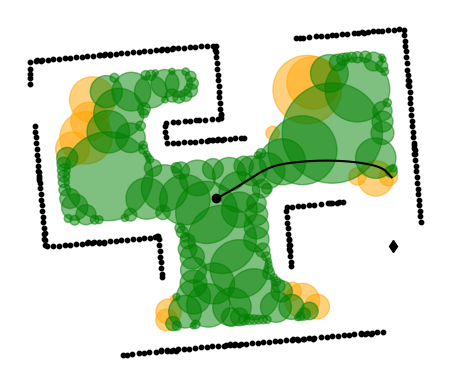}}
    \subfloat{\includegraphics[width=0.32\linewidth]{./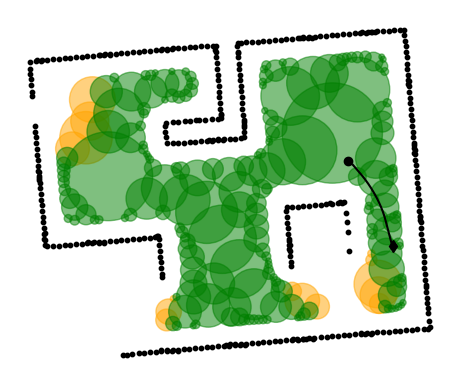}}
    \caption{An illustration of EBG modified for static unknown environments. Before the goal (black diamond) is in fully expanded bubbles (green), the planned trajectory (black solid line) points to the closest fully expanded bubble. As frontier bubbles (yellow) become visible, more fully expanded bubbles (green) appear.}
    \label{fig:unknown_env}
\end{figure}

An important aspect of motion planning algorithms is operation in an unknown environment.
We briefly outline an approach to planning in static unknown environments inspired by the concept of frontiers in occupancy grids \cite{yamauchi1997frontier}. The main idea is to verify whether a safe bubble had been fully \emph{visible} from the robot at a particular pose.
Those that had not been fully visible are the \emph{frontier} bubbles.
The queue $\calQ$ in Alg.~\ref{alg:ebg} is modified, so that frontier bubbles are skipped, and only those fully visible are expanded, until only frontier bubbles remain.
Upon receiving new sensor data, the visibility information is updated, and the expansion loop is repeated.
To plan a path towards a goal in this incomplete cover, we simply pick the closest bubble to the goal, with an additional terminal cost of distance to the goal. 
An example result is illustrated in Fig.~\ref{fig:unknown_env}.

\section{Evaluation}
\begin{figure}[t]
    \centering
    \subfloat[Gazebo Room\label{fig:environments:gazebo}]{\includegraphics[width=0.3\linewidth]{./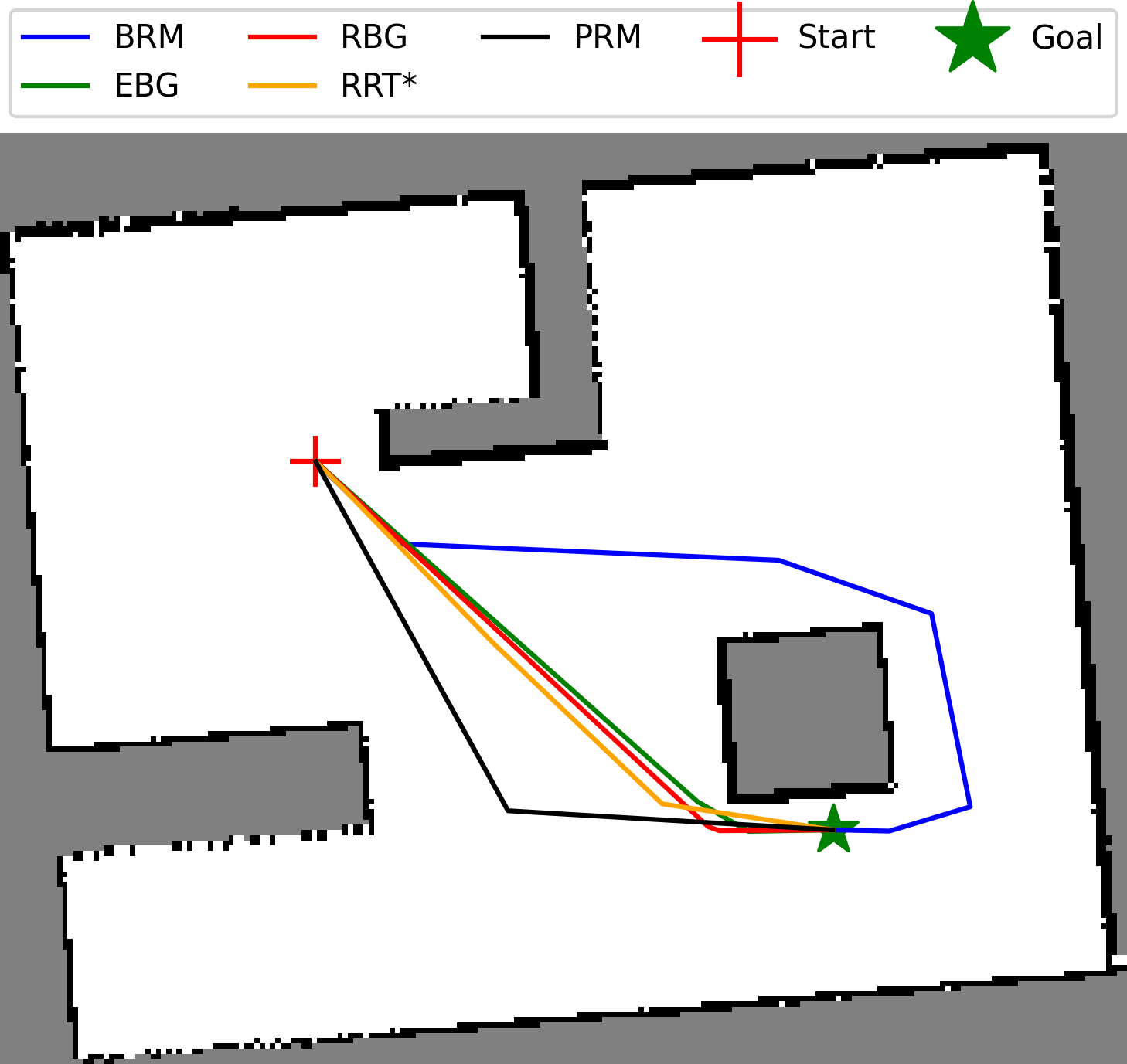}}%
    \subfloat[House Expo\label{fig:environments:house_expo}]{\includegraphics[width=0.3\linewidth]{./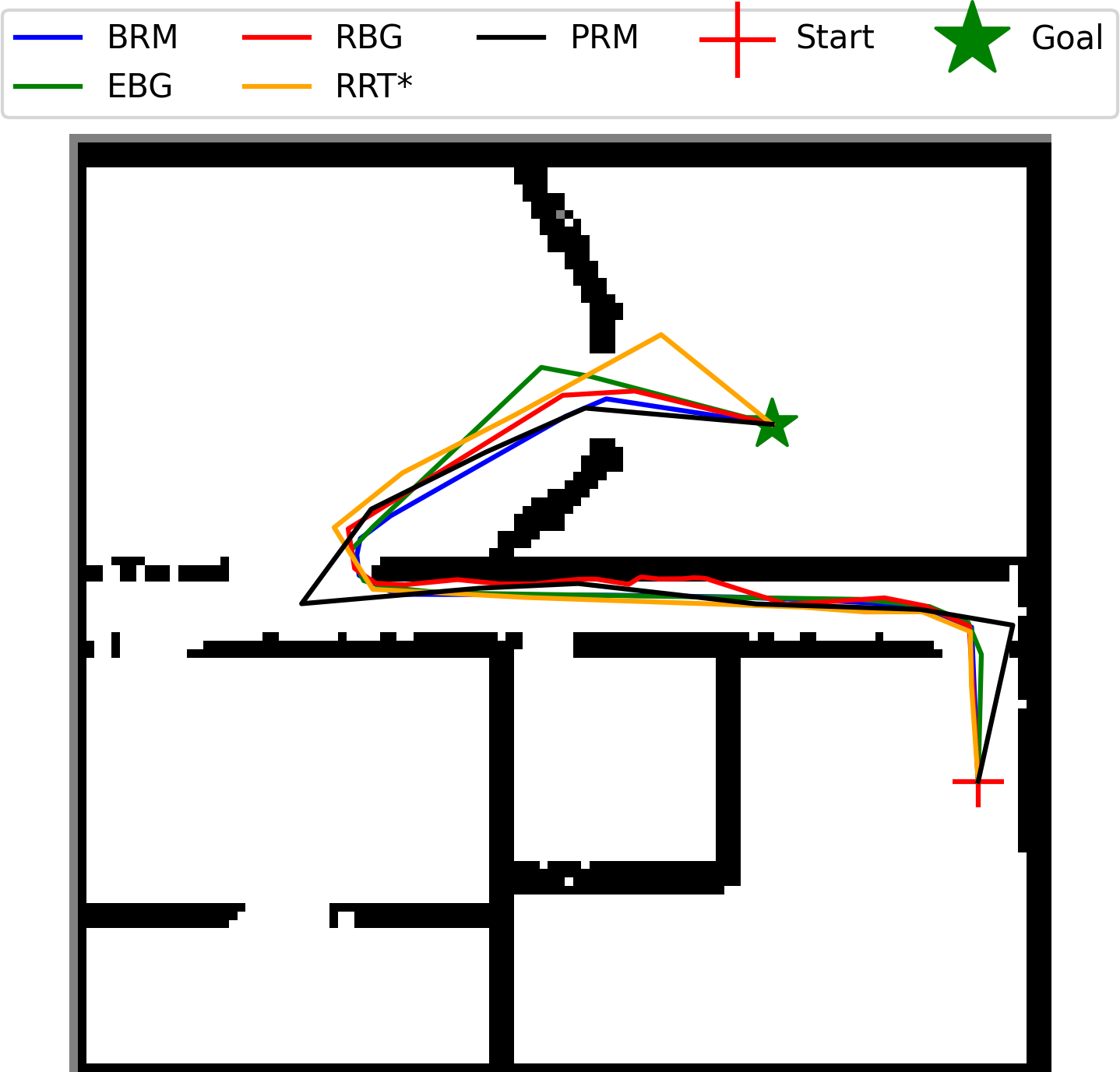}} \\
    \subfloat[Replica Hotel\label{fig:environments:replica}]
    {\includegraphics[width=0.6\linewidth]{./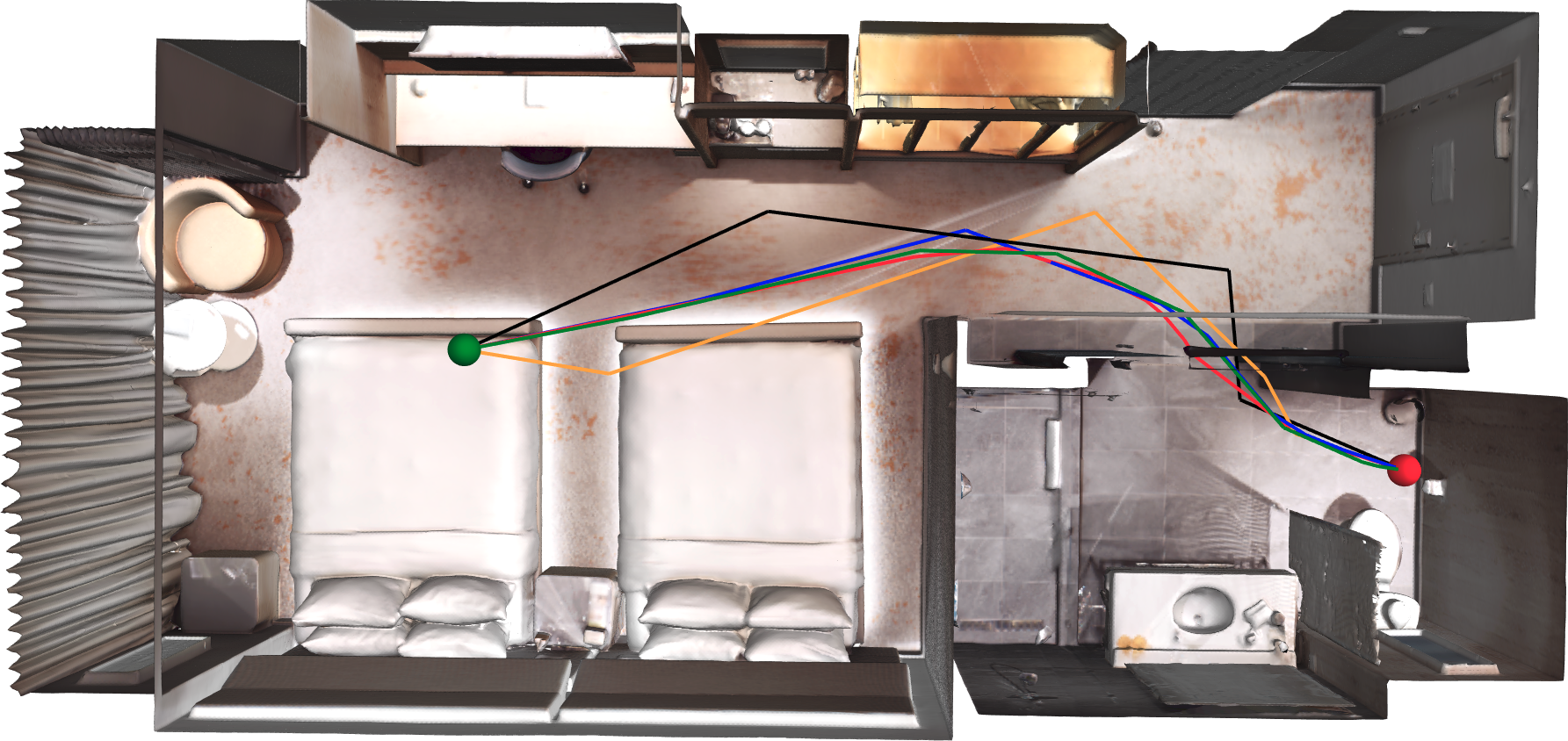}}
    \caption{Qualitative comparison of shortest-distance paths computed by different algorithms in three benchmarks. Paths shown correspond with the lowest computational effort recorded.} \label{fig:environments}
    \vspace{-3ex}
\end{figure}


We evaluate the performance of the safe bubble cover algorithms in three benchmark environments, shown in Fig.~\ref{fig:environments}. There are two 2D environments, namely the Gazebo Room \cite{wu2023loggpis} and the House Expo \cite{li2019houseexpo}, which are representative of indoor environments. The House Expo, in particular, features the challenge of a relatively narrow corridor.
We also consider a 3D environment, the Replica Hotel \cite{replica19arxiv}, which is widely used as a navigation benchmark, and represents an indoor environment cluttered with objects and a narrow entrance to the bathroom area.
For all environments, the distance function is built from simulated LiDAR data using the Log-GPIS algorithm \cite{wu2023loggpis}, which allows querying distance values at arbitrary continuous query points.

\subsection{Planning Shortest Distance Paths}\label{sec:shortest_paths}
We first compare the planning performance of BRM, RBG, and EBG against two classical sampling-based planning algorithms, namely PRM$^*$ \cite{kavraki1996prm,karaman2011optimal} and RRT$^*$ \cite{lavalle2001rrt,karaman2011optimal}, in planning shortest distance paths. PRM$^*$ and RRT$*$ perform collision checking by querying the distance field along each edge with a resolution of $0.05$ m, which was chosen empirically to ensure correct collision checking.

For each of the three environments, 100 random start/goal pairs were chosen.
For each start/goal pair, we repeated each planning algorithm with five different random seeds.
For each run, we evaluated the number of SDF query positions, success rate of finding a path, and the path cost of successful runs.

\begin{figure}[t]
    \centering
    \subfloat[Gazebo Room\label{fig:iter_query:gazebo}]{\includegraphics[width=0.32\textwidth]{./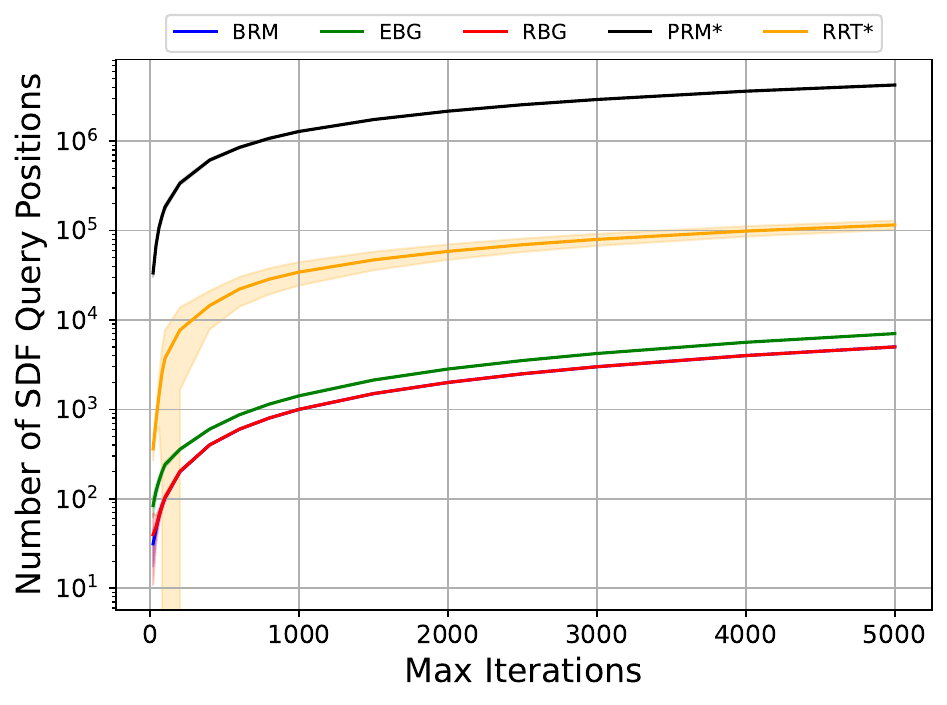}}
    \subfloat[House Expo\label{fig:iter_query:house_expo}]{\includegraphics[width=0.32\textwidth]{./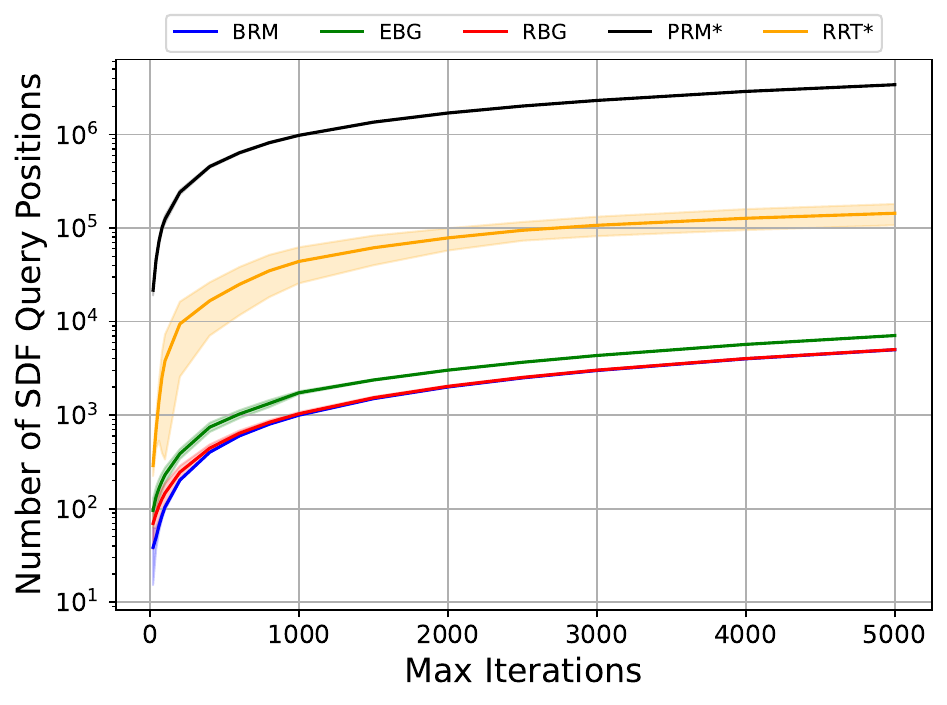}}
    \subfloat[Replica Hotel\label{fig:iter_query:replica}]{\includegraphics[width=0.32\textwidth]{./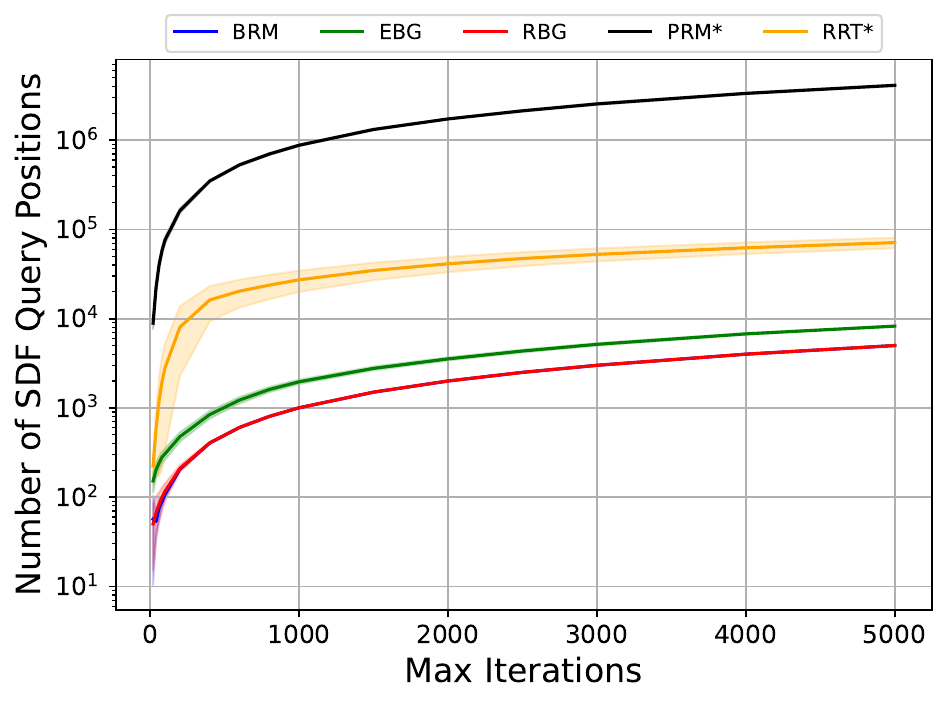}}    
    \caption{Comparison of number of SDF query positions over maximum allowed iterations. The number of SDF queries signifies the computational effort required. Bubble-based methods incur 1-2 orders of magnitude less computational effort than conventional baselines. The results are averaged over 100 random start/end pairs, each with 5 different random seeds.}
    \label{fig:iter_query}
\end{figure}

The results are shown in Fig.~\ref{fig:iter_query}-\ref{fig:eval_num_test_positions}.
Fig.~\ref{fig:iter_query} shows the number of unique SDF query positions over the maximum number of iterations or samples, which is representative of the computation time in an optimized implementation.
It can be seen that the classical sampling-based algorithms (RRT$^*$ and PRM$^*$) make 1 - 3 orders of magnitude more SDF queries than the bubble cover algorithms (BRM, RBG and EBG).
This shows that the bubble cover algorithms are 1 - 3 orders of magnitude more computationally efficient per iteration.

\begin{figure}[b]
    \centering
    \subfloat[Gazebo Room\label{fig:query_to_succ_rate:gazebo}]{\includegraphics[width=0.32\textwidth]{./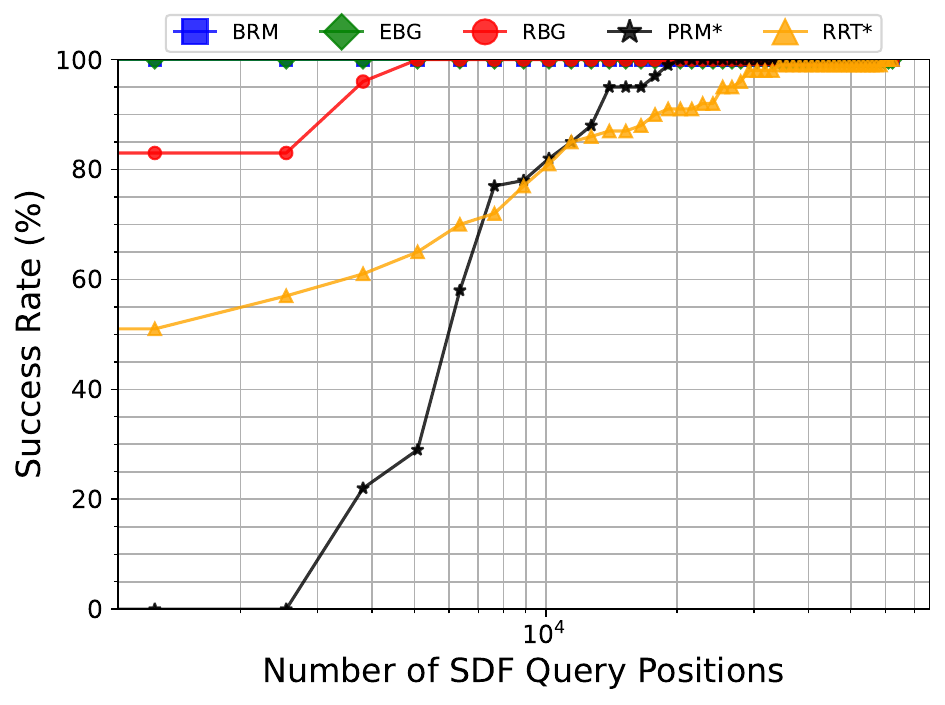}}
    \subfloat[House Expo\label{fig:query_to_succ_rate:house_expo}]{\includegraphics[width=0.32\textwidth]{./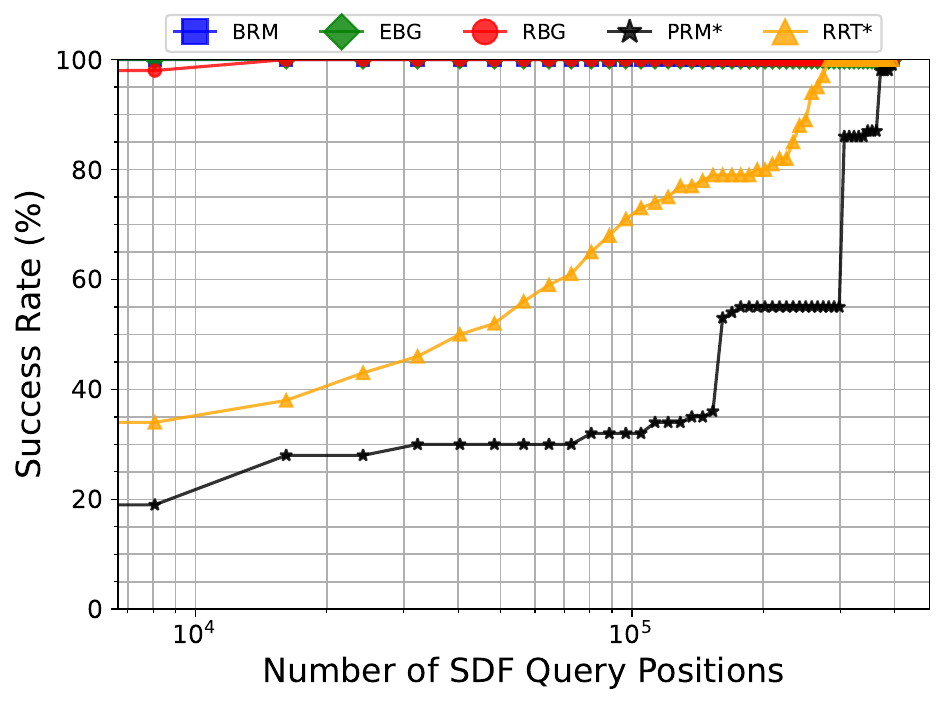}}
    \subfloat[Replica Hotel\label{fig:query_to_succ_rate:replica}]{\includegraphics[width=0.32\textwidth]{./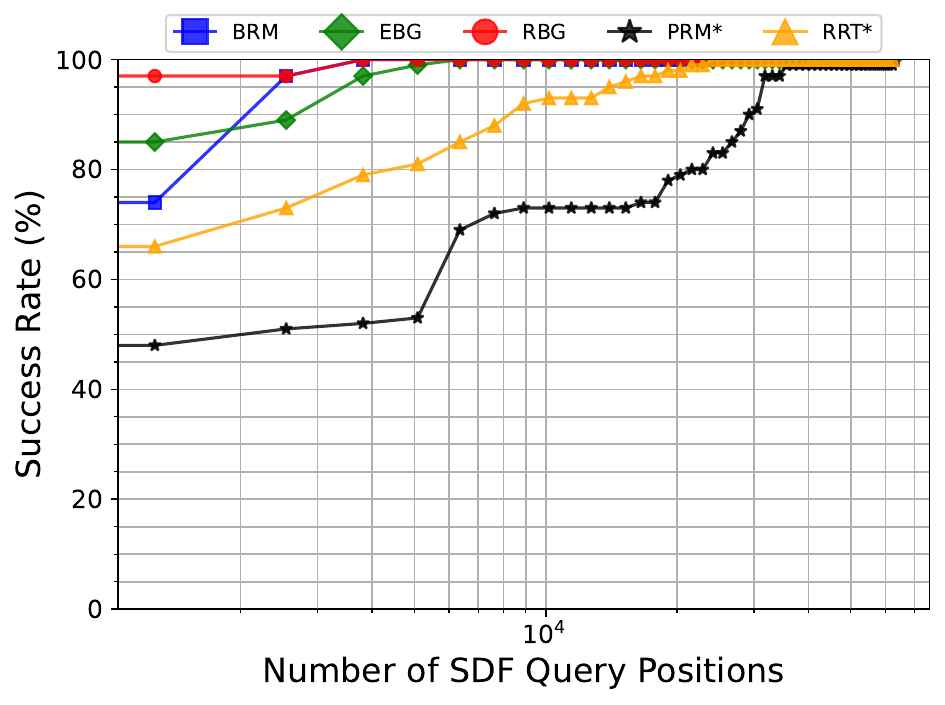}}    
    \caption{Comparison of success rate over number of SDF querry positions. Bubble-based methods reach 90\% success rate with at least four times less computational effort (in Replica Hotel). The evaluation is performed over 100 random start/end pairs, each with 5 different random seeds.}
    \label{fig:query_to_succ_rate}
\end{figure}

In Figs.~\ref{fig:query_to_succ_rate} and~\ref{fig:eval_num_test_positions}, we compare the success rate and path cost relative to computation effort represented by the number of SDF query positions, because each iteration incurs varying computational effort between different algorithms. 
For comparison, we normalize the path cost by the worst run in Fig.~\ref{fig:eval_num_test_positions}, since start/goal pairs vary.
In Fig.~\ref{fig:query_to_succ_rate}, it can be seen that RRT$^*$ and PRM$^*$ require at least four times more SDF queries than bubble-based algorithms to achieve 90\% success rate across all environments.
The distribution of normalized path cost and number of SDF query in Fig.~\ref{fig:eval_num_test_positions} shows that PRM$^*$ generally produces the highest cost paths.
In comparison, BRM, RBG and EBG are distributed near the bottom left corner, with up to 10-fold reduction in cost and computation simultaneously, with RBG and EBG showing advantage over BRM in 3D.
The qualitative comparison in Fig.~\ref{fig:environments} shows that, even with limited compute, the bubble-based methods produce shorter paths that cut corners.
This is because the bubble-based methods naturally incorporate continuous trajectory optimization.

\subsection{Comparison of Bubble Sampling Algorithms}
To evaluate the effectiveness of the proposed sampling methods, we task BRM, RBG and EBG to plan for a smooth, minimum snap trajectory \cite{mellinger2011differential}, and compare the cost and the area of reachable space of the safe bubble covers. The planning is done in the same setting as Sec.~\ref{sec:shortest_paths}, except for the cost function for continuous planning. The resulting trajectories can be used to control a quadrotor as illustrated in Fig.~\ref{fig:teaser}, by recovering the thrust and angular velocity inputs using differential flatness methods from \cite{morrell2018differential,liu2018quadrotor}.

To compute the reachable area, we generate safe bubble covers from 200 randomly initialized seed locations in the free space in the benchmark environments. 
We record the area of each bubble cover every 50 iterations, approximated using the Monte Carlo method with 100000 random samples from free space.
Since BRM is not iterative, we consider a varying number of samples instead of iterations.
Moreover, because BRM does not guarantee the bubble cover to be connected, we only consider the area of the connected component from the same starting location as RBG and EBG, to faithfully represent the utility in a planning scenario.
\begin{figure}[t]
    \centering
    \subfloat[Gazebo Room\label{fig:eval_num_test_positions:gazebo}]{\includegraphics[width=0.32\textwidth]{./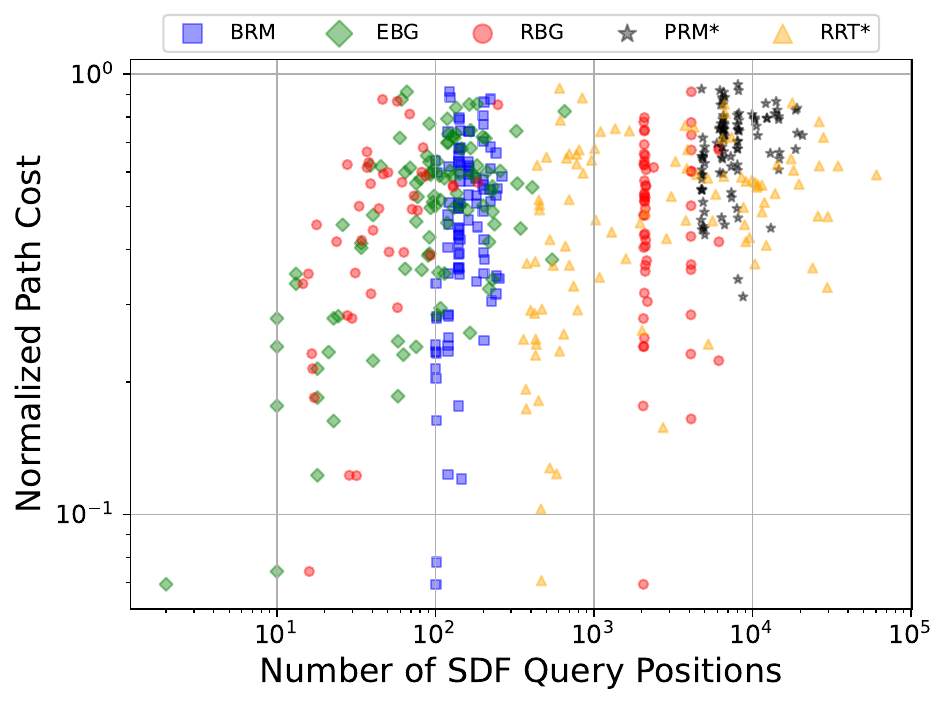}}
    \subfloat[House Expo\label{fig:eval_num_test_positions:house_expo}]{\includegraphics[width=0.32\textwidth]{./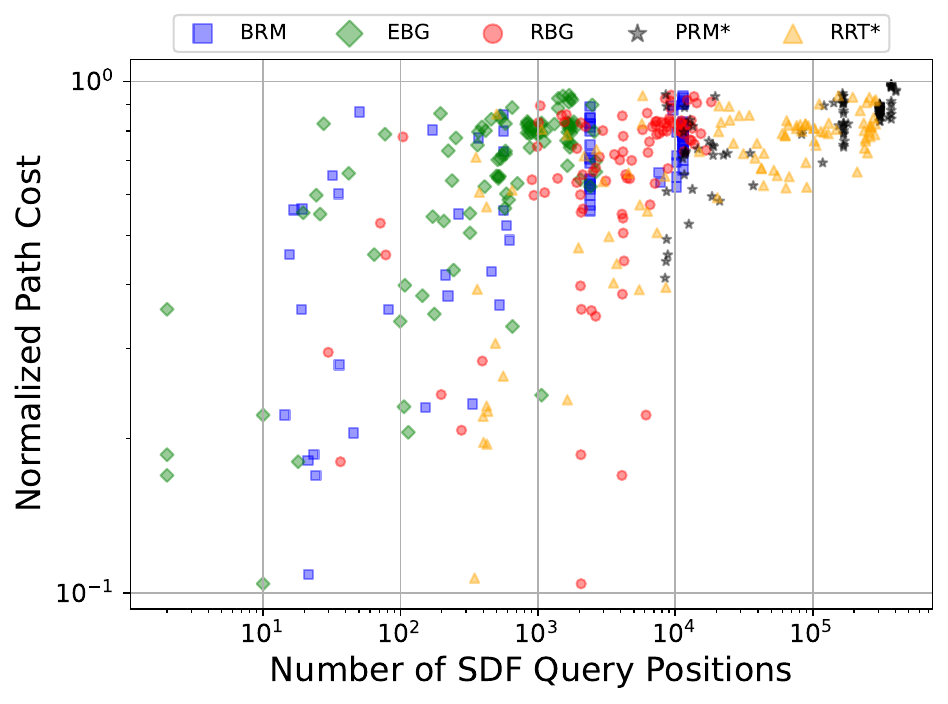}}
    \subfloat[Replica Hotel\label{fig:eval_num_test_positions:replica}]{\includegraphics[width=0.32\textwidth]{./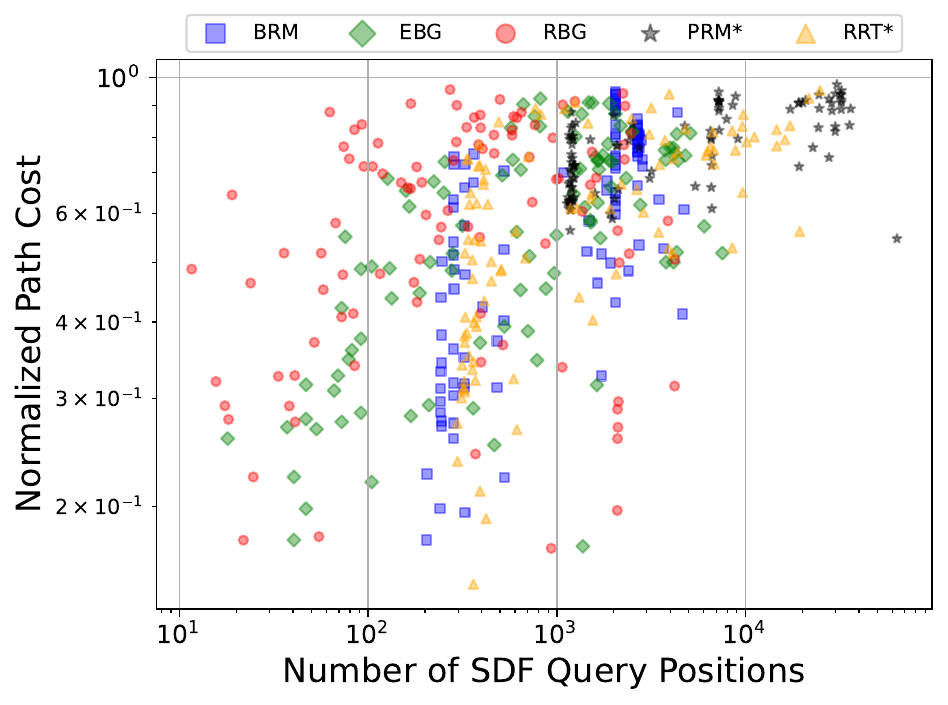}}        
    \caption{Comparison of normalized trajectory length relative to worst, over number of SDF query positions. The bubble-based methods return shorter paths in less time. The results show 100 random start/goal pairs, each with 5 different random seeds.}
    \label{fig:eval_num_test_positions}
\end{figure}
\begin{figure}[t]
    \centering
    \subfloat[Gazebo Room\label{fig:eval_reachable:gazebo}]{\includegraphics[width=0.3\linewidth]{./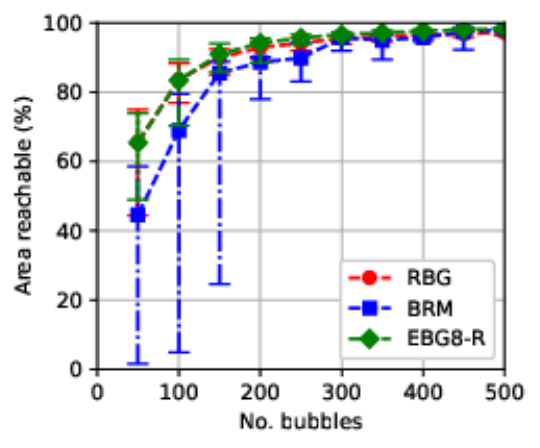}
    }
    \subfloat[House Expo\label{fig:eval_reachable:house_expo}]{\includegraphics[width=0.3\linewidth]{./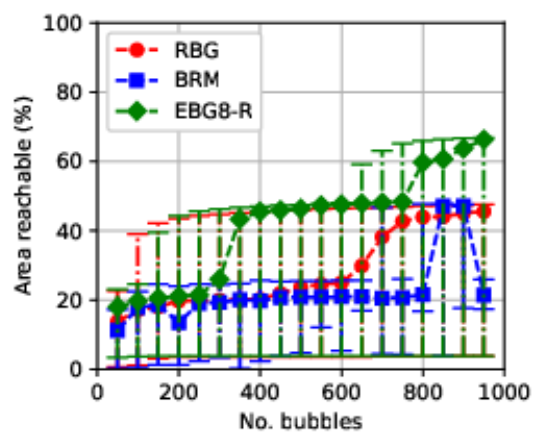}
    }
    \subfloat[Replica Hotel\label{fig:eval_reachable:replica}]{\includegraphics[width=0.3\linewidth]{./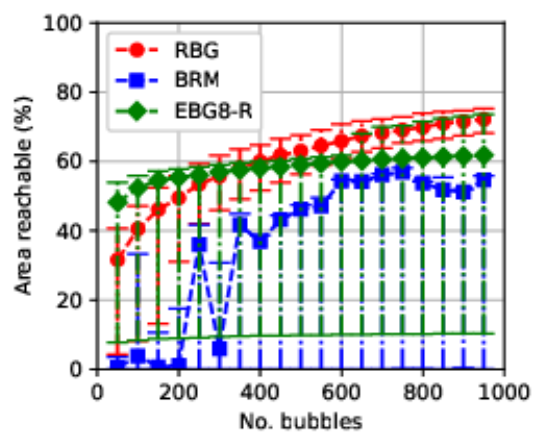}
    }    
    \caption{Comparison of reachable area over iterations. Trend lines: median, error bars: 10\% and 90\% quantiles. RBG and EBG perform almost equally well in coverage (except in House Expo), while BRM performs worse with large variability. EBG used with 8 random directions.} 
    \label{fig:eval_reachable}
\end{figure}
The results are shown in Fig.~\ref{fig:eval_reachable}.
It can be seen that in the Gazebo Room (Fig.~\ref{fig:environments:gazebo}), all methods eventually cover nearly 100\% of the free space (Fig.~\ref{fig:eval_reachable:gazebo}).
RBG and EBG cover the space faster than BRM, which is attributed to RBG and EBG having respective means to promote even spatial coverage.
The 10\% and 90\% quantiles of RBG and EBG are also narrower than BRM, which shows that RBG and EBG perform more reliably than BRM in covering the safe space.

A similar pattern is observed in the House Expo environment in Fig.~\ref{fig:eval_reachable:house_expo}, which is the most challenging.
Only up to 75\% of the environment is covered in the best case, due to the narrow corridor.
In this setting, RBG performs similarly to BRM in median, albeit with a higher 10\% quantile.
EBG performs the best, because it continues to make progress in the narrow corridor by expanding from current bubbles, whereas RBG and BRM rely on random samples.

\begin{figure}[t]
    \centering
    \subfloat[Gazebo Room]{\includegraphics[width=0.3\linewidth]{./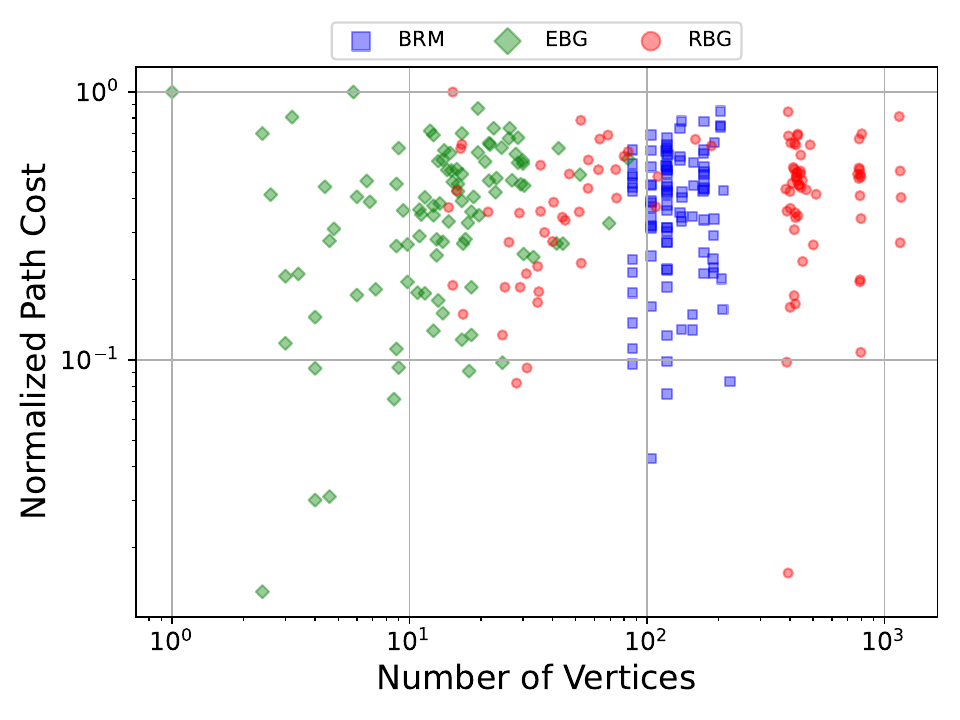}
    }
    \subfloat[House Expo]{\includegraphics[width=0.3\linewidth]{./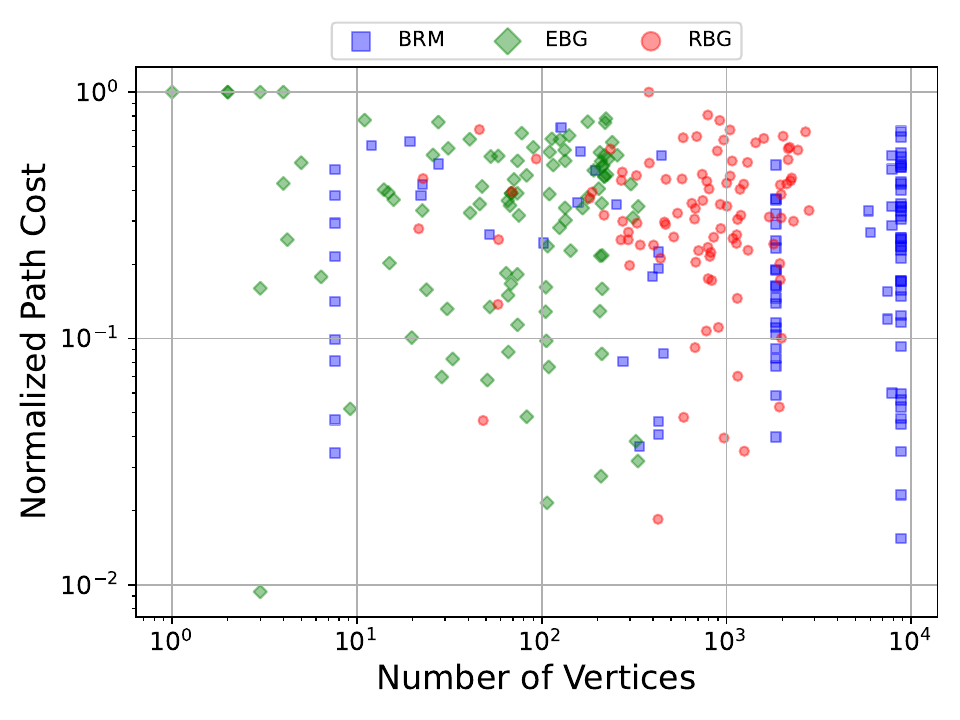}
    }
    \subfloat[Replica Hotel]{\includegraphics[width=0.3\linewidth]{./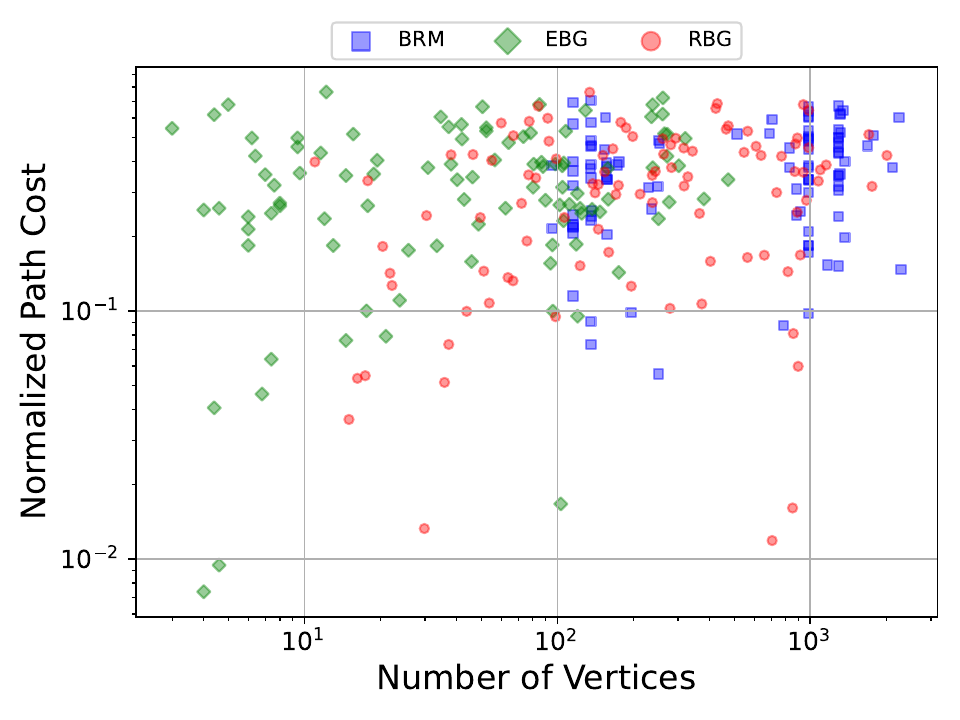}
    } \\    
    \caption{Comparison of minimum snap trajectory cost over number of bubbles. Bubbles from EBG generally find the lowest-cost paths with fewer bubbles, followed by RBG and BRM.}
    \label{fig:eval_continuous_cost}
\end{figure}

The Replica Hotel (Fig.~\ref{fig:environments:replica}) is also challenging, with all methods covering up to 90\% of the free space (Fig.~\ref{fig:eval_reachable:replica}).
It can be seen that RBG and EBG generally cover safe space faster than BRM at all number of iterations in the Replica Hotel environment. This is consistent with the observation from the Gazebo Room environment, with a greater gap.
The greater gap between BRM and other methods shows that ensuring even spatial coverage is more important with greater environment complexity and higher number of dimensions.
Moreover, RBG outperforms EBG in median reachable area except at the very initial and final stages, with consistently narrower quantiles.
We thus conclude that a) RBG exhibits better spatial coverage than EBG in 3D, and hypothesize that RBG will scale better to higher dimensions than EBG, and b) EBG is more suited for environments with narrow corridors. 

The cost distribution plot in Fig.~\ref{fig:eval_continuous_cost} suggests that, unlike for shortest length objective, EBG generally performs the best, followed by RBG and BRM.
We attribute this to two factors: a) there is a disparity between the discrete planning objective (Hausdorff distance as an upper bound on the length) and the continuous planning objective (minimum snap); and b) EBG has more overlapping areas than RBG, because RBG only expands outwards.
In this case, continuous planning can exploit the overlaps in EBG to better resolve the disparity in cost functions used. 

\section{Conclusion} 
\label{sec:conclusion}

We presented hybrid discrete-continuous sampling-based planning methods based on the idea of safe bubbles. We showed that safe bubbles can be defined for any Lipschitz-continuous safety constraint, with distance function as a special case.
We introduced three sampling algorithms for safe bubble cover construction, namely BRM, RBG and EBG, drawing inspirations from PRM, RRT, and EST respectively, and developed a hierarchical method for planning continuous trajectories in the safe bubble cover.
Our evaluations show that bubble-based methods yield trajectories with an order of magnitude lower cost, while being an order of magnitude more computationally efficient owing to the lack of explicit collision checking.

We anticipate that our results will lead to a new class of sampling-based planning algorithms for implicit representations that side-step collision checking and efficiently generate continuous trajectories. 
We hope that the proposed sampling techniques inspire other sampling methods for safe bubbles drawing upon decades of research in sampling-based planning.
We plan to support this direction through theoretical analysis of the bubble cover methods and applications to multi-rigid-body robots in future work.




\subsubsection*{Acknowledgement}
This work was supported by the Ministry of Trade, Industry, and Energy (MOTIE), Korea, under the Strategic Technology Development Program supervised by the Korea Institute for Advancement of Technology (KIAT) [Grant No. P0026052].
Cedric and Teresa are supported by the Australian Research Council Discovery Project under Grant DP210101336.

\end{document}